\newcommand{\captionfonts}{\normalsize}
\long\def\@makecaption#1#2{%
  \vskip\abovecaptionskip
  \sbox\@tempboxa{{\captionfonts #1: #2}}%
  \ifdim \wd\@tempboxa >\hsize
    {\captionfonts #1: #2\par}
  \else
    \hbox to\hsize{\hfil\box\@tempboxa\hfil}%
  \fi
  \vskip\belowcaptionskip}
\def\@endtheorem{\endtrivlist}
\newtheorem{theorem}{Theorem}
\newtheorem{corollary}{Corollary}
\newtheorem{definition}{Definition}
\newtheorem{proposition}{Proposition}
\newtheorem{lemma}{Lemma}
\begin{document}
\hspace{13.9cm}1
\ \vspace{20mm}\\
{\LARGE Learning with Proper Partial Labels}

\ \\
{\bf \large Zhenguo Wu$^{\displaystyle 1}$, \bf \large Jiaqi Lv$^{\displaystyle 2}$, \bf \large Masashi Sugiyama$^{\displaystyle 2, \displaystyle 1}$}\\
{$^{\displaystyle 1}$The University of Tokyo}\\
{$^{\displaystyle 2}$RIKEN AIP}\\
%

{\bf Keywords:} Weakly-supervised learning, partial-label learning, empirical risk minimization

\thispagestyle{empty}
\markboth{}{NC instructions}
\ \vspace{-0mm}\\
\begin{center} {\bf Abstract} \end{center}

 \emph{Partial-label learning} is a kind of weakly-supervised learning with inexact labels, where for each training example, we are given a set of candidate labels instead of only one true label. Recently, various approaches on partial-label learning have been proposed under different generation models of candidate label sets. However, these methods require relatively strong distributional assumptions on the generation models. When the assumptions do not hold, the performance of the methods is not guaranteed theoretically. In this paper, we propose the notion of \emph{properness} on partial labels. We show that this proper partial-label learning framework requires a weaker distributional assumption and includes many previous partial-label learning settings as special cases. We then derive a unified unbiased estimator of the classification risk. We prove that our estimator is risk-consistent, and we also establish an estimation error bound. Finally, we validate the effectiveness of our algorithm through experiments.

\section{Introduction}
\emph{Partial-label learning} (PL) \citep{cour2011learning, zeng2013learning} is a typical weakly-supervised learning framework \citep{WSLbook, zhou2018brief}, where each training instance is associated with a set of candidate labels and one of them is the true label.
The PL problem naturally arises in various real-world scenarios, such as web mining \citep{luo2010learning}, face recognition \citep{zeng2013learning}, birdsong classification \citep{liu2012conditional}, and multimedia content analysis \citep{cour2011learning}.

PL has made tremendous strides by developing along two lines.
The first strand is the tailored methodology and training strategy to disambiguate the candidate labels from a practical standpoint.
They can generally be divided into the \emph{average-based strategy} that treats each candidate label equally during training \citep{cour2011learning,zhang2015solving} and the \emph{identification-based strategy} that purifies each candidate label set on the fly to select the most likely true label in the model training phase \citep{liu2012conditional,zeng2013learning,lv2020progressive,feng2020provably,wen2021leverage}.

The second is engaging in theoretically grounded means of combating the adverse influence incurred by the inexact label annotation.
\citet{liu2014learnability} proposed the \emph{small ambiguity degree condition} for guaranteeing the empirical risk minimization (ERM) \emph{learnability} of the PL problem.
Then, some researchers focused on the \emph{statistical consistency} \citep{mohri2018foundations}, i.e., making the empirical risk computed on partial labels consistent to that computed on ordinary labels, for explaining the cause of success or failure of PL methods from the theoretical perspective.
However, these theoretical studies were based on fairly strict assumptions on the generation procedure of candidate labels.
For example, \citet{feng2020provably} proposed a uniform sampling process that a candidate label set is independently and uniformly sampled given a specific true label.
\citet{wen2021leverage} generalized the uniform assumption to a class-dependent setting.
Furthermore, there is another weakly-supervised learning problem related to PL named complementary-label learning \citep{ishida2017learning}, wherein an instance is equipped with a complementary label. 
A complementary label specifies a class that the instance
does not belong to, so it can be seen as an extreme case of PL. 
Most existing consistent complementary-label learning works \citep{ishida2017learning,ishida2019complementary,feng2020learning} are also based on the uniform generation process of complementary labels.
As a result, these assumptions can hold only in limited situations in the real world and may not be very practical. 
Motivated by the above observations, in this paper, we study the PL problem under a weak distributional assumption on the generation process. 
Specifically, we have the following contributions:
\begin{itemize}
\item We propose the notion of \emph{properness} for PL problems. We show that our \emph{Proper Partial-Label Learning} (PPL) framework requires a weaker distributional assumptions and includes much recent work as special cases, such as \emph{Learning with Complementary Labels} (CL) \citep{ishida2019complementary}, \emph{Learning with Multiple Complementary Labels} (MCL) \citep{feng2020learning}, and \emph{Provably Consistent Partial-Label Learning} (PCPL) \citep{feng2020provably}.
\item We derive a unified risk-consistent method for PPL problems, which is both model-independent and optimizer-independent. Theoretically, we establish an estimation error bound for our method. Experimentally, we demonstrate the effectiveness of the proposed method on various benchmark datasets.
\end{itemize}

\section{Preliminaries}
In this section, we first introduce the formulations of ordinary multi-class classification and PL. Then, we review related work on PL problems.

\subsection{Ordinary Multi-class Classification}
Let $\mathcal{X}\in\mathbb{R}^d$ be the instance space and $\mathcal{Y}:=\{1, 2, \ldots, K\}$ be the label space where $d$ is the dimension of the instance space and $K>2$ is the number of classes. 
We denote by $\boldsymbol{x}\in\mathcal{X}$ and $y\in\mathcal{Y}$ the input and output random variables. 
We assume that the input-output pair $(\boldsymbol{x}, y)\in\mathcal{X}\times\mathcal{Y}$ is sampled independently from an unknown joint probability distribution with density $p(\boldsymbol{x},y)$. 
Let $f:\mathcal{X}\rightarrow\mathbb{R}^K$ be a multi-class classifier, with which we determine a class $\hat{y}$ for input $\boldsymbol{x}$ by $\hat{y} = \arg\max_{y\in \mathcal{Y}}f_y(\boldsymbol{x})$, where $f_y$ is the $y$-th element of $f$. 
The goal of multi-class classification is to learn a multi-class classifier that minimizes the following classification risk:
\begin{equation}\label{eq:risk}
R(f;{\mathcal{L}})=\mathbb{E}_{p(\boldsymbol{x}, y)}\left[{\mathcal{L}}(f(\boldsymbol{x}), y)\right],
\end{equation}
where $\mathbb{E}_{p(\boldsymbol{x}, y)}\left[\cdot\right]$ denotes the expectation over the joint probability density $p(\boldsymbol{x}, y)$ and $\mathcal{L}$ is the loss function. 
Since $p(\boldsymbol{x}, y)$ is often unknown, it is a common practice to conduct ERM.
Assume that we have access to a training dataset $\{(\boldsymbol{x}_i, y_i)\}_{i=1}^n$ where each example is independently drawn from $p(\boldsymbol{x}, y)$, and then we minimize the empirical risk instead: 
\begin{equation*}
	\hat{R}_n(f;{\mathcal{L}})=\frac{1}{n}\sum_{i=1}^n{\mathcal{L}}(f(\boldsymbol{x}_i),y_i).
\end{equation*}

\subsection{Partial-Label Learning}
In PL, we do not have direct access to the true labels $y$. Instead, $\boldsymbol{x}_i$ is only equipped with a set of candidate labels, namely, a partial label. 
Suppose a partially labeled example is denoted by $(\boldsymbol{x}, s)$ where $s_i$ is the candidate label set of $\boldsymbol{x}$ and $s\in\mathcal{S}$.
$\mathcal{S} := \{2^{\mathcal{Y}}\setminus\emptyset\setminus\mathcal{Y}\}$ is the space of partial labels where $2^\mathcal{Y}$ denotes the power set of $\mathcal{Y}$.  
Assume $(\boldsymbol{x}, s)$ is drawn from an unknown probability distribution with density $p(\boldsymbol{x}, s)$. 
Then the key assumption of PL is that the partial label $s$ always includes the true label $y$ \citep{cour2011learning, liu2012conditional, liu2014learnability}:
\begin{equation}\label{pplproperty}
P(y\in s|\boldsymbol{x}, s) = 1,
\end{equation}
where $P$ denotes the probability.
We denote the PL training dataset by $\mathcal{D_\mathrm{PL}} = \{(\boldsymbol{x}_i, s_i)\}_{i=1}^n$, where each example is assumed to be independently drawn from the joint probability density $p(\boldsymbol{x}, s)$. The empirical risk $\hat{R}_n(f;{\mathcal{L}})$ is not accessible from $\mathcal{D_\mathrm{PL}}$, which is the common difficulty for applying ERM in weakly supervised classification. One popular solution is to rewrite the classification risk in the form whose direct unbiased estimator can be computed using weak labels \citep{WSLbook, bao2018classification, charoenphakdee2019symmetric, feng2021pointwise, cao2021learning}. Note that the risk rewrite technique is model-independent and optimizer-independent. This key property enables the algorithms to be highly flexible and scalable to large-scale datasets.

\subsection{Related Work}
Here, we review some previous works related to our proposed method: \emph{Learning with Complementary Labels} (CL) \citep{ishida2019complementary}, \emph{Learning with Multiple Complementary Labels} (MCL) \citep{feng2020learning}, and \emph{Provably Consistent Partial-Label Learning} (PCPL) \citep{feng2020provably}.
\paragraph{Learning with Complementary Labels.}
    A complementary label \citep{ishida2017learning, ishida2019complementary, yu2018learning} is a kind of weak labels indicating an incorrect class of an instance. We let $\bar{y}$ denote the complementary label. By taking $s = \mathcal{Y}\setminus\{\bar{y}\}$, the complementary label can be equivalently expressed as a partial label. CL is an extreme case of PL since the size of the candidate labels is fixed to its maximum, $K-1$: $|s|=K-1$. \cite{ishida2017learning, ishida2019complementary} assumed that the training dataset $\mathcal{D}_\text{CL} = \{(\boldsymbol{x}_i, \bar{y}_i)\}_{i=1}^n$ is independently sampled from the joint probability density \[\bar{p}(\boldsymbol{x}, \bar{y})=\frac{1}{K-1}\sum_{y\neq\bar{y}}p(\boldsymbol{x}, y).\] 
    This assumption implies that \[
        \bar{p}(\bar{y}|\boldsymbol{x}, y) = \frac{1}{K-1}\boldsymbol{1}\{\bar{y}\neq y\},\]
        where $\boldsymbol{1}\{\cdot\}$ is the indicator function. Equivalently, this can be expressed as \[p(s|\boldsymbol{x}, y) = \frac{1}{K-1}\boldsymbol{1}\{y\in s\},\]
    which means that all labels except the correct label $y$ are chosen uniformly to be the complementary label.
    Based on this assumption, the authors derived the following unbiased risk estimator:
    \begin{equation*}
        \hat{R}_\text{CL}(f;{\mathcal{L}}) = \frac{1}{n}\sum_{i=1}^n\left[\sum_{k\neq\bar{y_i}}{\mathcal{L}}(f(\boldsymbol{x}_i), k) - (K-2){\mathcal{L}}(f(\boldsymbol{x}_i),\bar{y}_i)\right].
    \end{equation*}
    
\paragraph{Learning with Multiple Complementary Labels.}
\cite{feng2020learning} generalized the problem to deal with multiple complementary labels. We let $\bar{s}$ denote the set of complementary labels. Similarly, by taking $s = {\mathcal{Y}}\setminus\bar{s}$, the multiple complementary label can be equivalently expressed as a partial label. MCL assumes that we have access to a dataset ${\mathcal{D}}_\text{MCL}=\{(\boldsymbol{x}_i, \bar{s}_i)\}_{i=1}^n$, where each example is independently drawn from the joint probability density $\bar{p}(\boldsymbol{x}, \bar{s})$ given as
\begin{equation*}
    \bar{p}(\boldsymbol{x},\bar{s}) = \sum_{i=1}^{K-1}\bar{Q}_i\bar{p}(\boldsymbol{x}, \bar{s} \mid \left\lvert\bar{s}\right\rvert=i),
\end{equation*}
where $|\bar{s}|$ denotes the size of $\bar{s}$, $\bar{Q}_{i}:= P(|\bar{s}| = i)$, and 
\begin{equation*}
    \bar{p}(\boldsymbol{x}, \bar{s} \mid \left\lvert\bar{s}\right\rvert=i):=
    \begin{cases}
    \frac{1}{\binom{K-1}{i}}\sum_{y\notin \bar{s}}p(\boldsymbol{x}, y) &\text{if }|\bar{s}|=i,\\
    0 &\text{otherwise.}
    \end{cases}
\end{equation*}
This assumption implies that
\[
    p(\bar{s}|\boldsymbol{x}, y) = \frac{\bar{Q}_{|\bar{s}|}}{\binom{K-1}{|\bar{s}|}}\boldsymbol{1}\{y\notin\bar{s}\},
\] and equivalently, that
\[    p(s|\boldsymbol{x}, y) = \frac{Q_{|s|}}{\binom{K-1}{|s| - 1}}\boldsymbol{1}\{y\in s\},
\] where $Q_i = P(|s|=i)$.
Based on this assumption, the authors derived the following unbiased risk estimator:
\begin{equation*}
    \hat{R}_\text{MCL}(f;{\mathcal{L}}) = \frac{1}{n}
    \sum_{i=1}^n\left(\sum_{y\notin\bar{s}_i}{\mathcal{L}}(f(\boldsymbol{x}_i),y)-\frac{K-1-|\bar{s}_i|}{|\bar{s}_i|}\sum_{y\in\bar{s}_i}{\mathcal{L}}(f(\boldsymbol{x}_i), y)\right).
\end{equation*}

\paragraph{Provably Consistent Partial-Label Learning.}
In PCPL~\citep{feng2020provably}, the authors assumed that
\begin{equation}\label{eq:uniform}
    p(s|\boldsymbol{x}, y) = \frac{1}{2^{K-1}-1}\boldsymbol{1}\{y\in s\}.
\end{equation}
This assumption implies that given the true label $y$, the remaining part of the partial label $s$ is uniformly sampled from $\mathcal{S}$.
With this assumption, the authors derived the following unbiased risk estimator:
\begin{equation}\label{pl}
    \hat{R}_\mathrm{PCPL}(f;{\mathcal{L}})=\frac{1}{2n}\sum_{i=1}^n\sum_{j=1}^K\frac{p(y=j|\boldsymbol{x}_i)}{\sum_{k\in s}p(y=k|\boldsymbol{x}_i)}{\mathcal{L}}(f(\boldsymbol{x}_i), j).
\end{equation}
The posterior probability $p(y=k|\boldsymbol{x})$ is not accessible. Therefore, the softmax function was utilized to approximate $p(y=k|\boldsymbol{x})$:
\[
p(y=k|\boldsymbol{x})\approx\frac{e^{f_k(\boldsymbol{x})}}{\sum_{i=1}^Ke^{f_i(\boldsymbol{x})}}.
\]
\section{Method}
\label{sec:method}
\label{chap_Proposed}
   In this section, we first propose the notion of \emph{properness} for PL. Then, we propose a general PL framework called the \emph{proper partial-label learning} (PPL) framework. Within this framework, we derive a unified risk-consistent algorithm for PL problems. Finally, we establish an estimation error bound for the proposed method.

\subsection{Proper Partial-Label Learning} 
Here, we introduce the concept of properness in PL:
\begin{definition}
We say that a PL problem is \emph{proper} if there exists a function $C: {\mathcal{X}}\times{\mathcal{S}}\rightarrow\mathbb{R}$ such that the following condition holds:
\begin{equation}\label{propercond}
p(s|\boldsymbol{x},y) = C(\boldsymbol{x}, s)\boldsymbol{1}\{y\in s\}.
\end{equation}
\end{definition}
The direct interpretation for this definition is two-fold. 
First, an example $(\boldsymbol{x}, y)$ can not generate a partial label $s$ which does not include $y$. This coincides with the basic property~\eqref{pplproperty} of PL. 
Second, when $(\boldsymbol{x}, y)$ is given, the probability of generating $s$ does not depend on $y$ as long as $y\in s$. 
    
The properness assumption is general in the sense that it includes many previous problem settings as special cases. In the \emph{Logistic Stick-Breaking Conditional Multinomial Model} (CMM)~\citep{liu2012conditional}, the authors treated the partial label $s$ as label noise and assumed that $s$ and $\boldsymbol{x}$ are conditionally independent when the true label $y$ is given:
\begin{equation}\label{cmm1}
p(s|\boldsymbol{x},y)=p(s|y).
\end{equation}
This conditional independence assumption is common in modeling label noise~\citep{han2018masking}. Further, the authors proposed the following assumption to model the noise distribution:
\begin{equation}\label{cmm2}
p(s|y) = C(s)\boldsymbol{1}\{y\in s\}.
\end{equation}
Here, the function $C$ only depends on $s$. In the following proposition, we obtain an equivalent expression for~\eqref{cmm1} and~\eqref{cmm2}:

\begin{proposition}\label{propCondInd}
\eqref{cmm1} and~\eqref{cmm2} hold if and only if there exists a function $C:{\mathcal{S}}\rightarrow\mathbb{R}$ such that the following condition holds:
\begin{equation}\label{cmmAssumption}
p(s|\boldsymbol{x}, y) = C(s)\boldsymbol{1}\{y\in s\}.
\end{equation}
\end{proposition}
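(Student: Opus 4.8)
The plan is to prove the two implications of the biconditional separately, treating the forward direction as an immediate substitution and concentrating the real work on the converse, which requires marginalizing out $\boldsymbol{x}$.

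For the forward direction, I would assume that \eqref{cmm1} and \eqref{cmm2} hold. Chaining the two equations gives $p(s|\boldsymbol{x},y) = p(s|y) = C(s)\boldsymbol{1}\{y\in s\}$ directly, which is exactly \eqref{cmmAssumption}. The function $C$ appearing in \eqref{cmmAssumption} is precisely the one supplied by \eqref{cmm2}, so no new construction is needed and this direction is essentially a substitution.

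For the converse, I would assume \eqref{cmmAssumption}, namely $p(s|\boldsymbol{x},y) = C(s)\boldsymbol{1}\{y\in s\}$, and recover both \eqref{cmm1} and \eqref{cmm2}. The key step is to compute $p(s|y)$ by marginalizing $\boldsymbol{x}$ out of $p(s|\boldsymbol{x},y)$. Writing $p(s|y) = \int_{\mathcal{X}} p(s|\boldsymbol{x},y)\,p(\boldsymbol{x}|y)\,d\boldsymbol{x}$ and substituting \eqref{cmmAssumption}, the factor $C(s)\boldsymbol{1}\{y\in s\}$ is constant in $\boldsymbol{x}$ and pulls out of the integral, leaving $p(s|y) = C(s)\boldsymbol{1}\{y\in s\}\int_{\mathcal{X}} p(\boldsymbol{x}|y)\,d\boldsymbol{x} = C(s)\boldsymbol{1}\{y\in s\}$, since the conditional density integrates to $1$. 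This is exactly \eqref{cmm2} with the same $C$. Combining it with the assumed \eqref{cmmAssumption} then yields $p(s|\boldsymbol{x},y) = C(s)\boldsymbol{1}\{y\in s\} = p(s|y)$, which is \eqref{cmm1}.

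The only delicate point, and the step I would treat most carefully, is the marginalization bookkeeping in the converse: one must integrate against the correct conditional density $p(\boldsymbol{x}|y)$ rather than the joint density, so that the total mass equals $1$, and then observe that it is precisely the independence of the integrand from $\boldsymbol{x}$ that forces the $\boldsymbol{x}$-free form. Beyond this routine care, I do not anticipate a genuine obstacle, since the structure of \eqref{cmmAssumption} already encodes both the conditional independence of \eqref{cmm1} and the $\boldsymbol{x}$-free noise model of \eqref{cmm2}.
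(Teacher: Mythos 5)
Your proposal is correct and follows essentially the same route as the paper's proof: the forward direction is a direct substitution, and the converse marginalizes $\boldsymbol{x}$ out via $p(s|y)=\int_{\mathcal{X}}p(s|\boldsymbol{x},y)\,p(\boldsymbol{x}|y)\,\mathrm{d}\boldsymbol{x}$, pulling out the $\boldsymbol{x}$-free factor $C(s)\boldsymbol{1}\{y\in s\}$ and using $\int_{\mathcal{X}}p(\boldsymbol{x}|y)\,\mathrm{d}\boldsymbol{x}=1$ to recover both \eqref{cmm2} and \eqref{cmm1}. Nothing is missing.
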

The proofs are given in Appendix~\ref{Appendix:Proofs}. 
By the above proposition, we can see that the CMM assumptions~\eqref{cmm1} and ~\eqref{cmm2} can be written in the form of Eq.~(\ref{cmmAssumption}), while Eq.~(\ref{cmmAssumption}) is clearly stricter than the properness assumption~\eqref{propercond} since it removes the dependence of $C$ on the instance by marginalizing $\boldsymbol{x}$.
Therefore, the CMM assumption~\eqref{cmmAssumption} is stronger than the properness assumption~\eqref{propercond}. 
\begin{table}[t]\centering
\caption{Examples of special PPL settings.}
\vspace{0.2cm}
\setlength{\tabcolsep}{5mm}{
        \begin{tabular}{ c|c c c c} 
        \hline
          & CMM & CL & MCL & PCPL \\ 
        \hline
        $C(\boldsymbol{x},s)$ & $C(s)$ & $\frac{1}{K-1}$ & $\frac{Q_{|s|}}{\binom{K-1}{|s| - 1}}$ & $\frac{1}{2^{K-1}-1}$\\
        \hline
        \end{tabular}}
        
        \label{table:summary}
    \end{table} 
Within the PL framework, we can reproduce the CL, MCL, and PCPL models introduced in Section 2 with different formulations of $C(\boldsymbol{x}, s)$ (Table~\ref{table:summary}).
We notice that CL, MCL, and PCPL have stronger assumptions than PPL for the same reason: they all assume that given a specific true label, the candidate label set is independent of the instance, i.e., mapping function $C$ is independent of $\boldsymbol{x}$. Thus they are the special cases of PPL. 
Besides, for CL classification, we have
\begin{equation*}
    Q_{|s|} = 
    \begin{cases}
    1 &\text{if }|s|=K-1,\\
    0 &\text{otherwise.}
    \end{cases}
\end{equation*}
This shows that the CL setting is stricter than the MCL setting. The following proposition shows that the PCPL assumption is stronger than the MCL assumption:
\begin{proposition}\label{proppcplstronger}
If the PCPL assumption $
    p(s|\boldsymbol{x}, y) = \frac{1}{2^{K-1}-1}\boldsymbol{1}\{y\in s\}
$ holds, then the MCL assumption $    p(s|\boldsymbol{x}, y) = \frac{Q_{|s|}}{\binom{K-1}{|s| - 1}}\boldsymbol{1}\{y\in s\}
$ also holds. But the opposite is not true.
\end{proposition}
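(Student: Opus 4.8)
The plan is to prove the two halves separately: a direct substitution argument for the forward implication, and an explicit counterexample for the failure of the converse. For the forward direction, I would begin by reading off the candidate-set-size distribution that the PCPL assumption induces. Since $p(s|\boldsymbol{x},y) = \frac{1}{2^{K-1}-1}\boldsymbol{1}\{y\in s\}$ places equal mass on every admissible set containing $y$, the marginal probability of a size-$i$ set is obtained purely by counting. The number of subsets $s\subseteq\mathcal{Y}$ with $|s|=i$ and $y\in s$ is $\binom{K-1}{i-1}$, because the remaining $i-1$ elements are chosen freely from the other $K-1$ labels. Hence the induced distribution is
\[
Q_i := P(|s| = i) = \frac{\binom{K-1}{i-1}}{2^{K-1}-1}, \qquad i = 1, \ldots, K-1,
\]
which depends on neither $\boldsymbol{x}$ nor $y$, so it is a legitimately defined $Q$.

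Next I would confirm that $Q$ is a genuine probability distribution and then substitute it into the MCL density. Normalization $\sum_{i=1}^{K-1} Q_i = 1$ follows from the binomial identity $\sum_{i=1}^{K-1}\binom{K-1}{i-1} = \sum_{j=0}^{K-2}\binom{K-1}{j} = 2^{K-1}-1$. Plugging this $Q_i$ into the MCL form then gives
\[
\frac{Q_{|s|}}{\binom{K-1}{|s|-1}}\boldsymbol{1}\{y\in s\} = \frac{1}{\binom{K-1}{|s|-1}}\cdot\frac{\binom{K-1}{|s|-1}}{2^{K-1}-1}\boldsymbol{1}\{y\in s\} = \frac{1}{2^{K-1}-1}\boldsymbol{1}\{y\in s\},
\]
which is exactly the PCPL density. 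This shows PCPL is the MCL instance with a binomially distributed $Q$, completing the forward implication.

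For the converse, I would exhibit a valid MCL instance whose $Q$ differs from the binomial one above, so that its density cannot be of the PCPL form. The cleanest choice is the CL specialization already recorded in the text, namely $Q_i = \boldsymbol{1}\{i = K-1\}$: the resulting MCL density concentrates all mass on the $K-1$ maximal sets of size $K-1$ and assigns zero probability to every smaller set, whereas the PCPL density is strictly positive on candidate sets of every size $1,\ldots,K-1$. Since the two densities disagree for any $K>2$, MCL does not imply PCPL.

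I expect no substantive obstacle here, as the argument is elementary combinatorics carried out by direct substitution. The only step requiring genuine care is the counting — correctly identifying $\binom{K-1}{i-1}$ as the number of size-$i$ candidate sets containing $y$ and verifying that the induced $Q_i$ sums to one — after which both directions follow immediately.
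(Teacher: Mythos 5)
Your proof is correct and takes essentially the same route as the paper's: both halves hinge on the same facts, namely that PCPL induces the size distribution $Q_k = \binom{K-1}{k-1}/(2^{K-1}-1)$ (you obtain it by direct counting, the paper by explicitly marginalizing over $\boldsymbol{x}$ and $y$), and that the CL instance $Q_i = \boldsymbol{1}\{i=K-1\}$ witnesses the failure of the converse. Your explicit substitution of this $Q$ back into the MCL form is a small completeness improvement over the paper, which leaves that step implicit after stating ``it suffices to prove'' the formula for $Q_k$.
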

    \begin{figure}[t]
    \begin{center}
        \includegraphics[scale=0.5]{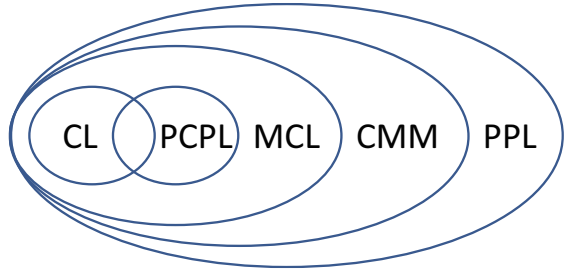}
        \caption{Diagram representing the assumption relations among different PPL settings.}
        \label{figure:summary}
    \end{center}        
    \end{figure}
We summarize the relations between different special PPL settings in Figure~\ref{figure:summary}. The area AND(CL, PCPL) represents MCL problems where $K = 2$. The partial label generation process adopted in Section 4 lies in the area MCL$\setminus$OR(CL, PCPL). The area PPL$\setminus$CMM represents PPL problems where the function $C$ also depends on $\boldsymbol{x}$. The area outside of PPL represents PL problems where $p(s|\boldsymbol{x}, y)$ depends on the choice of $y$ given $y\in s$.

\subsection{Risk-Consistent Algorithm}
Here, we describe our proposed algorithm for PPL.
\paragraph{Candidate Label Confidence.}
Various weakly supervised learning settings based on confidence data have been studied recently, including positive-confidence data~\citep{ishida2018binary}, similarity-confidence data~\citep{cao2021learning}, confidence data for instance-dependent label-noise~\citep{berthon2021confidence} and single-class confidence~\citep{cao2021multi}. Here, we define label confidence, signifying the label posterior probability given the instance and partial label:
\begin{definition}\label{def: candidateconf}
The candidate label confidence $r_{y}(\boldsymbol{x}, s)$ is defined as 
\begin{equation*}
    r_{y}(\boldsymbol{x}, s) = p(y|\boldsymbol{x}, s).
\end{equation*}
\end{definition}

Then we have the following theorem which provides an equivalent definition for the properness of PL:
\begin{theorem}\label{theorem:pplequivalence}
The properness assumption~\eqref{propercond} holds if and only if
\begin{equation}\label{pplConfidence}
    r_y(\boldsymbol{x}, s) = \frac{p(y|\boldsymbol{x})}{\sum_{k\in s}p(y=k|\boldsymbol{x})}\boldsymbol{1}\{y\in s\}.
\end{equation}
\end{theorem}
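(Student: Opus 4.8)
The plan is to prove both directions by a direct application of Bayes' rule, since the candidate label confidence $r_y(\boldsymbol{x},s) = p(y|\boldsymbol{x},s)$ and the conditional $p(s|\boldsymbol{x},y)$ are linked through
\begin{equation*}
p(y|\boldsymbol{x},s)\, p(s|\boldsymbol{x}) = p(s|\boldsymbol{x},y)\, p(y|\boldsymbol{x}),
\end{equation*}
as both sides equal the joint $p(y,s|\boldsymbol{x})$. The key algebraic observation is that the normalizer $p(s|\boldsymbol{x})$ factorizes cleanly once properness is assumed, so the proof reduces to a one-line cancellation in each direction.

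For the ``only if'' direction, I would assume~\eqref{propercond} and first compute the marginal $p(s|\boldsymbol{x}) = \sum_{y'\in\mathcal{Y}} p(s|\boldsymbol{x},y')\,p(y'|\boldsymbol{x})$. Substituting $p(s|\boldsymbol{x},y') = C(\boldsymbol{x},s)\boldsymbol{1}\{y'\in s\}$ pulls $C(\boldsymbol{x},s)$ outside the sum and leaves $p(s|\boldsymbol{x}) = C(\boldsymbol{x},s)\sum_{k\in s} p(y=k|\boldsymbol{x})$. Plugging this together with $p(s|\boldsymbol{x},y) = C(\boldsymbol{x},s)\boldsymbol{1}\{y\in s\}$ into Bayes' rule, the factor $C(\boldsymbol{x},s)$ cancels between numerator and denominator and yields exactly~\eqref{pplConfidence}.

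For the ``if'' direction, I would start from Bayes' rule in the form $p(s|\boldsymbol{x},y) = r_y(\boldsymbol{x},s)\,p(s|\boldsymbol{x})/p(y|\boldsymbol{x})$, substitute the assumed expression~\eqref{pplConfidence} for $r_y$, and note that the factor $p(y|\boldsymbol{x})$ cancels, giving
\begin{equation*}
p(s|\boldsymbol{x},y) = \frac{p(s|\boldsymbol{x})}{\sum_{k\in s}p(y=k|\boldsymbol{x})}\boldsymbol{1}\{y\in s\}.
\end{equation*}
Defining $C(\boldsymbol{x},s) := p(s|\boldsymbol{x})\big/\sum_{k\in s}p(y=k|\boldsymbol{x})$, which manifestly depends only on $\boldsymbol{x}$ and $s$ and not on $y$, recovers~\eqref{propercond}.

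The main point to be careful about is the degenerate cases where a denominator vanishes: when $p(y|\boldsymbol{x})=0$ the conditional $p(s|\boldsymbol{x},y)$ is only defined up to a null set, and when $\sum_{k\in s}p(y=k|\boldsymbol{x})=0$ the partial label $s$ has zero marginal probability and can be excluded. I would handle these by restricting the identities to the support of the relevant distributions and checking the case $y\notin s$ separately, where the indicator forces both sides to zero consistently, reflecting that the true label always lies in the candidate set. These are routine measure-zero caveats rather than a substantive obstacle; the heart of the argument in each direction is the cancellation of the instance- and set-dependent factor.
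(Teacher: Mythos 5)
Your proposal is correct and follows essentially the same route as the paper's own proof: both directions are a direct application of Bayes' rule, with the marginal $p(s|\boldsymbol{x})$ factorizing as $C(\boldsymbol{x},s)\sum_{k\in s}p(y=k|\boldsymbol{x})$ in the forward direction and $C(\boldsymbol{x},s):=p(s|\boldsymbol{x})\big/\sum_{k\in s}p(y=k|\boldsymbol{x})$ exhibited explicitly in the converse. Your additional care about vanishing denominators is a reasonable refinement that the paper's proof silently glosses over, but it does not change the argument.
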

Therefore, the properness implies that given a set of candidate labels, the label confidence of one of the candidates is proportional to its class posterior probability.
Conversely, if the label confidence can be expressed in the above equation, the PL problem is properness.
The advantage of this property is that we can compute $r_y(\boldsymbol{x}, s)$ without specifying $C(\boldsymbol{x}, s)$, which enables our derivation for a risk-consistent algorithm.

\paragraph{Unbiased Risk Estimator.}

Here, we will use the PPL training dataset ${\mathcal{D}}_\text{PPL}=\{(\boldsymbol{x}_i, s_i)\}_{i=1}^n$ to construct an unbiased estimator of the objective risk~\eqref{eq:risk}.
\begin{theorem}\label{theorem:clc}
   The classification risk~\eqref{eq:risk} can be equivalently expressed as
   \begin{equation*}
       R(f;{\mathcal{L}}) = \mathbb{E}_{p(\boldsymbol{x}, s)}\left[\sum_{j\in s}r_j(\boldsymbol{x}, s){\mathcal{L}}(f(\boldsymbol{x}), j)\right].
   \end{equation*}
\end{theorem}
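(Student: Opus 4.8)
The plan is to rewrite the ordinary classification risk \eqref{eq:risk}, which is an expectation over the hidden joint density $p(\boldsymbol{x}, y)$, as an expectation over the observable density $p(\boldsymbol{x}, s)$ by inserting the partial label $s$ via the law of total probability and then applying the tower property of conditional expectation. First I would expand the risk explicitly as
\begin{equation*}
R(f;{\mathcal{L}}) = \sum_{y\in{\mathcal{Y}}}\int_{\mathcal{X}} p(\boldsymbol{x}, y)\,{\mathcal{L}}(f(\boldsymbol{x}), y)\, d\boldsymbol{x},
\end{equation*}
and introduce the partial label by marginalizing the joint density of the triple $(\boldsymbol{x}, y, s)$, i.e.\ $p(\boldsymbol{x}, y) = \sum_{s\in{\mathcal{S}}} p(\boldsymbol{x}, y, s)$. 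Next I would factorize each joint term as $p(\boldsymbol{x}, y, s) = p(y\mid\boldsymbol{x}, s)\, p(\boldsymbol{x}, s) = r_y(\boldsymbol{x}, s)\, p(\boldsymbol{x}, s)$, invoking Definition~\ref{def: candidateconf}. Interchanging the two finite summations over $y$ and $s$ then yields
\begin{equation*}
R(f;{\mathcal{L}}) = \sum_{s\in{\mathcal{S}}}\int_{\mathcal{X}} p(\boldsymbol{x}, s)\sum_{y\in{\mathcal{Y}}} r_y(\boldsymbol{x}, s)\,{\mathcal{L}}(f(\boldsymbol{x}), y)\, d\boldsymbol{x}.
\end{equation*}

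The key step is to collapse the inner sum over all of ${\mathcal{Y}}$ into a sum over $j\in s$. This is justified because the candidate label confidence $r_y(\boldsymbol{x}, s) = p(y\mid\boldsymbol{x}, s)$ vanishes whenever $y\notin s$: by the defining property of PL~\eqref{pplproperty} we have $P(y\in s\mid\boldsymbol{x}, s) = 1$, equivalently $p(y\mid\boldsymbol{x}, s) = 0$ for $y\notin s$ (this is also immediate from the explicit form in Theorem~\ref{theorem:pplequivalence}, where the factor $\boldsymbol{1}\{y\in s\}$ annihilates the out-of-set terms). Hence $\sum_{y\in{\mathcal{Y}}} r_y(\boldsymbol{x}, s)\,{\mathcal{L}}(f(\boldsymbol{x}), y) = \sum_{j\in s} r_j(\boldsymbol{x}, s)\,{\mathcal{L}}(f(\boldsymbol{x}), j)$, and reassembling the outer integral and sum back into an expectation over $p(\boldsymbol{x}, s)$ gives exactly the claimed identity.

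I do not anticipate a serious obstacle: the argument is essentially the tower property combined with the support restriction of the confidence. The only point requiring mild care is the interchange of the sum over $s$ with the integral over $\boldsymbol{x}$ (and the sum over $y$); since ${\mathcal{Y}}$ and ${\mathcal{S}}$ are finite and ${\mathcal{L}}$ is assumed integrable against the relevant densities, a Fubini--Tonelli argument makes the interchange harmless. It is worth emphasizing that the derivation never requires the explicit value of the normalizer $C(\boldsymbol{x}, s)$, nor even the full properness assumption beyond the basic inclusion property~\eqref{pplproperty}; it uses only that $r_j$ is a genuine posterior supported on $s$, which is precisely why the resulting estimator ends up being model-independent and optimizer-independent.
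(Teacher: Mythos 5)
Your proposal is correct and follows essentially the same route as the paper's own proof: marginalizing $p(\boldsymbol{x}, y)$ over $s$, factorizing via the posterior $p(y\mid\boldsymbol{x}, s) = r_y(\boldsymbol{x}, s)$, and collapsing the sum over $\mathcal{Y}$ to $j\in s$ using the support restriction implied by Eq.~\eqref{pplproperty}. Your treatment is in fact slightly more explicit than the paper's, which performs the collapse to $\sum_{j\in s}$ silently in its final line.
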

Theorem~\ref{theorem:clc} indicates the existence of label confidence that was defined in Eq.~(\ref{pplConfidence}) for PL by which the weighted loss is risk-consistent.
Based on Theorem~\ref{theorem:pplequivalence} and Theorem~\ref{theorem:clc}, we can have the following theorem:
   \begin{theorem}\label{theorem:ppl}
   For PPL, the classification risk~\eqref{eq:risk} can be equivalently expressed as
   \begin{equation}\label{eq:pplrisk}
       R(f;{\mathcal{L}}) = \mathbb{E}_{p(\boldsymbol{x}, s)}\left[\sum_{j\in s}\frac{p(y=j|\boldsymbol{x})}{\sum_{k\in s}p(y=k|\boldsymbol{x})}{\mathcal{L}}(f(\boldsymbol{x}), j)\right].
   \end{equation}
   \end{theorem}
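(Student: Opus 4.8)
The plan is to obtain the claim as an immediate consequence of Theorem~\ref{theorem:clc} and Theorem~\ref{theorem:pplequivalence}, so the work here is essentially a substitution rather than a fresh derivation. Since the PPL framework assumes properness~\eqref{propercond}, both prior results are available: Theorem~\ref{theorem:clc} expresses the target risk purely in terms of the candidate label confidence $r_j(\boldsymbol{x}, s)$, and Theorem~\ref{theorem:pplequivalence} supplies a closed form for that confidence under properness.

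Concretely, I would first invoke Theorem~\ref{theorem:clc} to write $R(f;{\mathcal{L}}) = \mathbb{E}_{p(\boldsymbol{x}, s)}\left[\sum_{j\in s}r_j(\boldsymbol{x}, s){\mathcal{L}}(f(\boldsymbol{x}), j)\right]$. I would then substitute the explicit expression from Theorem~\ref{theorem:pplequivalence}, namely $r_j(\boldsymbol{x}, s) = \frac{p(y=j|\boldsymbol{x})}{\sum_{k\in s}p(y=k|\boldsymbol{x})}\boldsymbol{1}\{j\in s\}$, into the summand. Because the outer summation already ranges only over indices $j\in s$, the indicator $\boldsymbol{1}\{j\in s\}$ equals $1$ throughout and can be dropped, yielding exactly~\eqref{eq:pplrisk}.

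The substitution itself is routine; the genuine content sits in the two results being combined, and that is where I would expect any difficulty to lie. In Theorem~\ref{theorem:clc} the main move is the law of total expectation---rewriting $\mathbb{E}_{p(\boldsymbol{x}, y)}$ as an iterated expectation over $p(\boldsymbol{x}, s)$ and then $p(y|\boldsymbol{x}, s)$, using the PL property~\eqref{pplproperty} to restrict the inner sum to $j\in s$. In Theorem~\ref{theorem:pplequivalence} the key step is a Bayes'-rule computation in which the instance-dependent normalizer $C(\boldsymbol{x}, s)$ cancels between numerator and denominator, which is precisely what makes the confidence computable without knowing $C$. Assuming those two results as given, I expect no further obstacle: the remaining step is only the cancellation of the indicator described above.
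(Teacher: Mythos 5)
Your proposal matches the paper's proof exactly: the paper likewise obtains Theorem~\ref{theorem:ppl} by invoking Theorem~\ref{theorem:pplequivalence} to get the closed form~\eqref{pplConfidence} for $r_j(\boldsymbol{x}, s)$ and substituting it into the expression from Theorem~\ref{theorem:clc}, with the indicator absorbed by the sum over $j\in s$. Your remarks on where the real content lies (the law of total expectation in Theorem~\ref{theorem:clc} and the cancellation of $C(\boldsymbol{x}, s)$ via Bayes' rule in Theorem~\ref{theorem:pplequivalence}) are also consistent with the paper's appendix proofs.
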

   Finally, an unbiased risk estimator is given by the following corollary:
   \begin{corollary}
   When the properness assumption~\eqref{propercond} holds, given the dataset ${\mathcal{D}}_\mathrm{PPL}=\{(\boldsymbol{x}_i, s_i)\}_{i=1}^n$, the following is an unbiased estimator of the classification risk:
   \begin{equation}\label{emprisk:ppl}
       \hat{R}_\mathrm{PPL}(f;{\mathcal{L}}) = \frac{1}{n}\sum_{i=1}^n\sum_{j\in s_i}\frac{p(y=j|\boldsymbol{x}_i)}{\sum_{k\in s_i}p(y=k|\boldsymbol{x}_i)}{\mathcal{L}}(f(\boldsymbol{x}_i), j).
   \end{equation}
   \end{corollary}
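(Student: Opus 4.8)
The plan is to obtain the corollary as an immediate consequence of Theorem~\ref{theorem:ppl}, since the substantive work—rewriting the population risk through the candidate-label confidence—has already been carried out there. First I would abbreviate the per-example integrand by
\[
g(\boldsymbol{x}, s) := \sum_{j\in s}\frac{p(y=j|\boldsymbol{x})}{\sum_{k\in s}p(y=k|\boldsymbol{x})}{\mathcal{L}}(f(\boldsymbol{x}), j),
\]
so that Theorem~\ref{theorem:ppl} reads $R(f;{\mathcal{L}}) = \mathbb{E}_{p(\boldsymbol{x}, s)}[g(\boldsymbol{x}, s)]$, while the proposed estimator~\eqref{emprisk:ppl} is exactly the sample mean $\hat{R}_\mathrm{PPL}(f;{\mathcal{L}}) = \frac{1}{n}\sum_{i=1}^n g(\boldsymbol{x}_i, s_i)$.

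Next I would invoke the sampling assumption from the preliminaries: each pair $(\boldsymbol{x}_i, s_i)$ in ${\mathcal{D}}_\mathrm{PPL}$ is drawn independently from the joint density $p(\boldsymbol{x}, s)$. Taking the expectation over the draw of the dataset and using linearity of expectation then gives
\[
\mathbb{E}_{{\mathcal{D}}_\mathrm{PPL}}\!\left[\hat{R}_\mathrm{PPL}(f;{\mathcal{L}})\right] = \frac{1}{n}\sum_{i=1}^n \mathbb{E}_{p(\boldsymbol{x}_i, s_i)}\!\left[g(\boldsymbol{x}_i, s_i)\right] = \frac{1}{n}\sum_{i=1}^n R(f;{\mathcal{L}}) = R(f;{\mathcal{L}}),
\]
where each summand equals $R(f;{\mathcal{L}})$ by Theorem~\ref{theorem:ppl}. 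This is precisely the unbiasedness claim.

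The one point deserving a line of care—and it is minor—is that the integrand $g$ be well defined on the support of $p(\boldsymbol{x}, s)$, i.e., that the normalizer $\sum_{k\in s}p(y=k|\boldsymbol{x})$ not vanish. This is guaranteed by properness: the basic property~\eqref{pplproperty} forces the true label to lie in $s$, so the denominator is bounded below by $p(y|\boldsymbol{x})$, which is positive wherever $(\boldsymbol{x}, s)$ carries positive density. Together with a bounded loss this makes the expectation finite and justifies exchanging expectation with the finite inner sum over $j\in s$. I do not expect any genuine obstacle: the corollary is a restatement of Theorem~\ref{theorem:ppl} at the level of the empirical mean, and unbiasedness follows at once from the i.i.d.\ sampling assumption and linearity of expectation.
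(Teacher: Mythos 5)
Your proposal is correct and takes essentially the same route as the paper: the paper states this corollary without a separate proof, treating it as an immediate consequence of Theorem~\ref{theorem:ppl}, and your argument---i.i.d.\ sampling from $p(\boldsymbol{x}, s)$ plus linearity of expectation applied to the risk rewrite---is exactly that immediate consequence made explicit. Your extra remark on the non-vanishing normalizer $\sum_{k\in s}p(y=k|\boldsymbol{x})$ is a sound precaution the paper leaves implicit, but it changes nothing substantive.
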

\paragraph{Risk-Consistent Algorithm.}
\begin{algorithm}[t]
    \caption{PPL}
    \hspace*{0.02in} {\bf Input:}
    Model $f$, number of epochs $T$, number of mini-batches $B$, partially labeled training dataset ${\mathcal{D}}_\mathrm{PPL}=\{(\boldsymbol{x}_i, s_i)\}_{i=1}^n$
    \begin{algorithmic}[1]
    \State \textbf{Initialize} $\hat{p}(y=j|\boldsymbol{x}_i)=1\mathbin{/}K$;
    \State Let $\mathcal{A}$ be an stochastic optimization algorithm;
    \For{$t = 1, 2,\ldots,T$}
    \State \textbf{Shuffle} ${\mathcal{D}}_\mathrm{PPL}$;
        \For{$l = 1, 2,\ldots, B$}
            \State \textbf{Fetch} mini-batch ${\mathcal{D}}_l$ from ${\mathcal{D}}_\mathrm{PPL}$;
            
            \State \textbf{Compute} loss from Eq.~\eqref{emprisk:ppl} using $\hat{p}$;
            \State \textbf{Update} $f$ by $\mathcal{A}$;
            \State \textbf{Update} $\hat{p}$ by Eq.~\eqref{softmaxRevisited};
        \EndFor
        \State \textbf{end for}
    \EndFor
    \State \textbf{end for}
    \end{algorithmic}
    \hspace*{0.02in} {\bf Output: $f$}
    \label{alg_ppl}
    \end{algorithm}
We approximate the class-posterior probability $p(y=j|\boldsymbol{x})$ by the softmax output:
\begin{equation}\label{softmaxRevisited}
    p_f(y|\boldsymbol{x}) = \frac{e^{f_y(\boldsymbol{x})}}{\sum_{i=1}^Ke^{f_i(\boldsymbol{x})}}.
\end{equation}
Then we can compute the empirical risk~\eqref{emprisk:ppl} with ${\mathcal{D}}_\mathrm{PPL}$ and~\eqref{softmaxRevisited}. 
The label confidences are estimated by a progressive method, similar to \cite{lv2020progressive}, that is, the update of the model and the estimation of the label confidences are accomplished seamlessly. Comparing with the classical EM methods that trains the model until convergence in every M-step, such progressive approach gets rid of the overfitting issues since it does not overemphasize the convergence, so that the model will not fit the initial inexact prior knowledge, thereby reducing subsequent estimation error.
The pseudo-code and the implementation of the proposed method are presented in Algorithm~\ref{alg_ppl} and Figure~\ref{fig_ppl}. 
We can see that the proposed method allows the use of any model (ranging from linear to deep model), is compatible with a large group of loss functions and stochastic optimization.
Note that our practical implementation coincides with that of the \emph{progressive identification} (PRODEN) algorithm~\citep{lv2020progressive} and the PCPL algorithm~\citep{feng2020provably}. 
However, they were derived in totally different manners.
To be specific, PCPL is based on a uniform data generation assumption~(\ref{eq:uniform}), PPL proposes a unified framework that encapsulates PCPL and several classical assumptions in PL community, and PRODEN derives from an intuitive insight that the true label will incur the minimal loss.
As a result, PRODEN can only be proved to be classifier-consistent under the deterministic scenario, PCPL is risk-consistent under a strong uniform sampling process assumption, and our result on risk-consistency only requires that the PL problem is proper and applies to both deterministic and stochastic scenarios.

\begin{figure}[t]
	\begin{center}
		\includegraphics[scale=0.45]{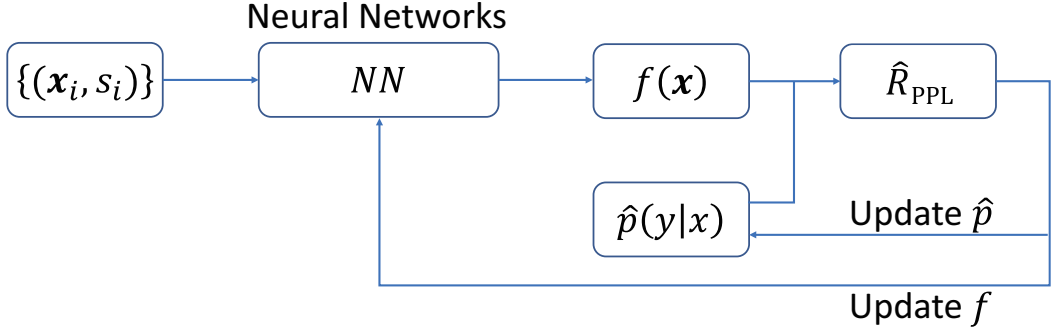}
	\end{center}
	\caption{Implementation diagram of PPL.}
	\label{fig_ppl}
\end{figure}

Next, we establish an estimation error bound to analyze the estimation error $R({\hat{f}_\mathrm{PPL}}) - R(f^*)$, where $f^*=\arg\min_{f\in{\mathcal{F}}} R(f)$ is the optimal classifier and $\hat{f}_{\mathrm{PPL}}=\arg\min_{f\in\mathcal{F}}\hat{R}_{\mathrm{PPL}}(f)$ is the empirical risk minimizer.
To this end, we need to define a class of real functions $\mathcal{F}_i$, and then $\mathcal{F}=\oplus_{i\in[K]}\mathcal{F}_i$ is the $K$-valued function space. 
Then we have the following theorem:
\begin{theorem}\label{theorem:bound}
Suppose that ${\mathcal{L}}(f(\boldsymbol{x}), s)$ is $\rho$-Lipschitz with respect to $f(\boldsymbol{x})$ for all $y\in{\mathcal{Y}}$ and upper-bounded by $M$, i.e., $M=\sup_{\boldsymbol{x}\in{\mathcal{X}},y\in{\mathcal{Y}},f\in{\mathcal{F}}}{\mathcal{L}}(f(\boldsymbol{x}),y)$. Let $\mathfrak{R}_n({\mathcal{F}}_i)$ be the Rademacher complexity \citep{mohri2018foundations} of $\mathcal{F}_i$ with sample size $n$. Then, for any $\delta>0$, with probability at least $1-\delta$,
\begin{equation*}
R({\hat{f}_\mathrm{PPL}}) - R(f^*) \leq 4\sqrt{2}\rho\sum_{i=1}^K \mathfrak{R}_n({\mathcal{F}}_i)+ 2M\sqrt{\frac{\log\frac{2}{\delta}}{2n}}.
\end{equation*}
\end{theorem}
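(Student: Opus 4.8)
The plan is to follow the standard uniform-deviation route for estimation error bounds, in three stages: an empirical-risk-minimization (ERM) decomposition, a concentration-plus-symmetrization step, and a contraction step that reduces everything to the per-coordinate complexities $\mathfrak{R}_n(\mathcal{F}_i)$. First I would introduce the induced loss class
\[
\mathcal{G} = \Bigl\{ (\boldsymbol{x}, s) \mapsto g_f(\boldsymbol{x},s) := \sum_{j\in s} r_j(\boldsymbol{x}, s)\,{\mathcal{L}}(f(\boldsymbol{x}), j) : f\in\mathcal{F} \Bigr\},
\]
so that by Theorem~\ref{theorem:ppl} we have $R(f) = \mathbb{E}_{p(\boldsymbol{x},s)}[g_f]$ and $\hat R_{\mathrm{PPL}}(f) = \frac{1}{n}\sum_{i=1}^n g_f(\boldsymbol{x}_i, s_i)$. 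The key structural fact I would record up front is that the weights $r_j(\boldsymbol{x}, s) = p(j\mid\boldsymbol{x},s)$ are nonnegative and sum to one over $j\in s$, so $g_f(\boldsymbol{x},s)$ is a convex combination of values $\{{\mathcal{L}}(f(\boldsymbol{x}), j)\}_{j\in s}$, each lying in $[0, M]$; hence $g_f\in[0,M]$ uniformly. The ERM decomposition, using the optimality of $\hat f_{\mathrm{PPL}}$ for $\hat R_{\mathrm{PPL}}$ and of $f^*$ for $R$, then gives
\[
R(\hat f_{\mathrm{PPL}}) - R(f^*) \le 2\sup_{f\in\mathcal{F}} \bigl| R(f) - \hat R_{\mathrm{PPL}}(f) \bigr|,
\]
which supplies the outer factor $2$ appearing on both terms of the final bound.

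Second, I would control $\sup_f | R(f) - \hat R_{\mathrm{PPL}}(f) |$. Since replacing a single training example changes $\hat R_{\mathrm{PPL}}$ by at most $M/n$ (by the boundedness $g_f\in[0,M]$), applying McDiarmid's inequality to each one-sided supremum and taking a union bound over the two sides yields, with probability at least $1-\delta$,
\[
\sup_{f\in\mathcal{F}} \bigl| R(f) - \hat R_{\mathrm{PPL}}(f) \bigr| \le \mathbb{E}\Bigl[\sup_{f\in\mathcal{F}}\bigl(R(f)-\hat R_{\mathrm{PPL}}(f)\bigr)\Bigr] + M\sqrt{\frac{\log(2/\delta)}{2n}},
\]
where the $2/\delta$ comes from the two-sided union bound. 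A routine symmetrization argument then bounds the expected uniform deviation by $2\mathfrak{R}_n(\mathcal{G})$.

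Third, and this is where I expect the real work, I would relate $\mathfrak{R}_n(\mathcal{G})$ to $\sum_i \mathfrak{R}_n(\mathcal{F}_i)$. Writing $g_f(\boldsymbol{x},s) = \psi_{\boldsymbol{x},s}(f(\boldsymbol{x}))$ with $\psi_{\boldsymbol{x},s}(\boldsymbol{u}) = \sum_{j\in s} r_j(\boldsymbol{x},s)\,{\mathcal{L}}(\boldsymbol{u}, j)$, the convex-combination structure together with the $\rho$-Lipschitzness of each ${\mathcal{L}}(\cdot, j)$ shows that $\psi_{\boldsymbol{x},s}$ is itself $\rho$-Lipschitz in the vector $\boldsymbol{u}=f(\boldsymbol{x})\in\mathbb{R}^K$. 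The main obstacle is that $\mathcal{L}$ couples all $K$ coordinates of $f(\boldsymbol{x})$, so the ordinary scalar Talagrand contraction lemma does not apply directly; instead I would invoke a vector-valued contraction inequality of Maurer type, which peels off the $\rho$-Lipschitz map $\psi_{\boldsymbol{x},s}$ at the cost of a $\sqrt{2}$ factor and leaves a doubly-indexed Rademacher average. Exploiting the direct-sum structure $\mathcal{F} = \oplus_{i\in[K]}\mathcal{F}_i$, the supremum over $f$ separates across coordinates, giving
\[
\mathfrak{R}_n(\mathcal{G}) \le \sqrt{2}\,\rho \sum_{i=1}^K \mathfrak{R}_n(\mathcal{F}_i).
\]
Chaining the three stages produces $R(\hat f_{\mathrm{PPL}}) - R(f^*) \le 4\sqrt{2}\,\rho\sum_{i=1}^K \mathfrak{R}_n(\mathcal{F}_i) + 2M\sqrt{\log(2/\delta)/(2n)}$, matching the claim. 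Throughout, the boundedness of the weights ($r_j\in[0,1]$ with $\sum_{j\in s} r_j = 1$) is exactly what makes both the McDiarmid bounded-difference constant and the Lipschitz constant of $\psi_{\boldsymbol{x},s}$ independent of the particular $(\boldsymbol{x},s)$, so I would verify carefully that these two uses remain valid even though $r_j$ depends on the unknown class-posterior.
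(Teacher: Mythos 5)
Your proposal is correct and follows essentially the same three-stage route as the paper's proof: the identical ERM decomposition with factor $2$ (the paper's Lemma~\ref{lemma1}), the identical McDiarmid-plus-symmetrization step with bounded-difference constant $M/n$ and a two-sided union bound (Lemma~\ref{lemma2}), and the identical Maurer vector-contraction step yielding the $\sqrt{2}\rho\sum_{i=1}^K\mathfrak{R}_n(\mathcal{F}_i)$ bound (Lemma~\ref{lemma3}). The only difference is in how the contraction is executed, and it is slightly in your favor: the paper first asserts $\mathfrak{R}_n(\mathcal{G})\leq\mathfrak{R}_n(\Pi)$ for the ordinary-label loss class $\Pi=\{(\boldsymbol{x},y)\mapsto\mathcal{L}(f(\boldsymbol{x}),y)\}$ and then contracts $\Pi$, whereas you absorb the convex-combination weights $r_j(\boldsymbol{x},s)$ into a single $\rho$-Lipschitz map $\psi_{\boldsymbol{x},s}$ and apply the vector contraction inequality to $\mathcal{G}$ directly, which sidesteps the paper's rather informal comparison between Rademacher complexities of classes defined over different domains and distributions.
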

As $n\rightarrow\infty$, $\mathfrak{R}_n({\mathcal{F}}_i)\rightarrow 0$ for all parametric models with a bounded norm such as deep networks trained with weight decay \citep{lu2019on}.
Hence, the above theorem demonstrates that $\hat{f}_\mathrm{PPL}$ converges $f^*$ as the number of training data tends to infinity, i.e., the learning is consistent.

\section{Experiments}
In this section, we empirically demonstrate the effectiveness of the proposed method PPL. 
The implementation is based on PyTorch and experiments were carried out with NVIDIA Tesla V100 GPU.
Experiment details are provided in Appendix~\ref{Appendix_Expdetails}.

\paragraph{Datasets.}
Our experiments were conducted on four widely-used benchmark datasets in image classification, namely, MNIST \citep{lecun1998gradient}, Fashion-MNIST \citep{xiao2017fashion}, Kuzushiji-MNIST \citep{clanuwat2018deep}, and CIFAR-10 \citep{Krizhevsky2009cifar}.
We manually corrupted these datasets into partially labeled versions by using generated  partial labels in the area MCL$\setminus$OR(CL, PCPL) (Figure~\ref{figure:summary}).

\paragraph{Model and Loss Function. }
For showing that the superiority of the proposed method is not caused by a specific model, we conducted experiments with various base models, including a simple linear model ($d$-10), a 5-layer multi-layer perceptron (MLP) ($d$-300-300-300-300-10), a 32-layer ResNet~\citep{he2016deep}, and a 22-layer DenseNet~\citep{huang2017densely}. 
We applied batch normalization~\citep{ioffe2015batch} to each hidden layer and used $l_2$-regularization.
We adopted the popular cross-entropy loss as the loss function,
and used the stochastic gradient descent optimizer~\citep{robbins1951stochastic} with momentum $0.9$.
 
 \begin{figure}[t]
	\begin{center}
		\includegraphics[scale=0.9]{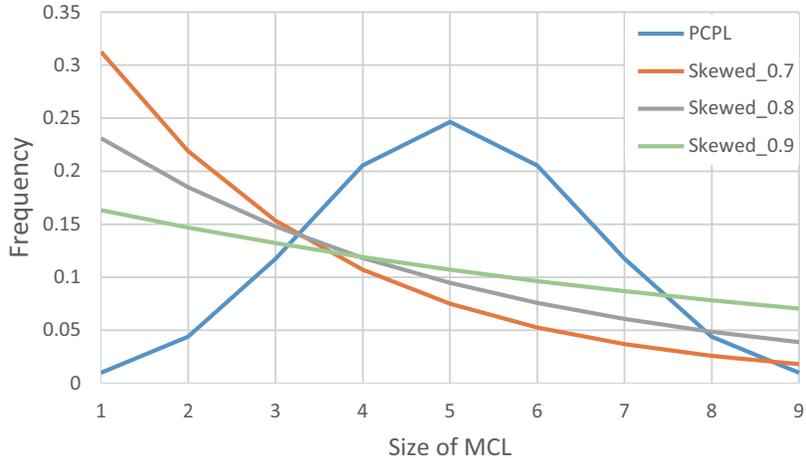}
		\caption{Illustrative example of different distributions.}
		\label{skewandmcl}
	\end{center}        
\end{figure}

\begin{table}[t]\centering
	\caption{Average partial label size for different distributions.}
	\vspace{0.2cm}
	\newcommand{\tabincell}[2]{\begin{tabular}{@{}#1@{}}#2\end{tabular}}
	\setlength{\tabcolsep}{5mm}{
		\begin{tabular}{ c|c c c c } 
			\hline
			& \tabincell{c}{PCPL} & \tabincell{c}{0.9-skewed} & \tabincell{c}{0.8-skewed} & 
			\tabincell{c}{0.7-skewed} \\ 
			\hline
			$\mathbb{E}\left[|s|\right]$ & 5.0 & 5.69 & 6.40  & 7.05 \\ 
			
			\hline
	\end{tabular}}
	
	\label{averagesize}
\end{table} 

\begin{figure*}[!t]
	\vskip 0.2in
	\subfigure{
		\begin{minipage}[b]{0.24\columnwidth}
			\centering
			\includegraphics[width=1.5in]{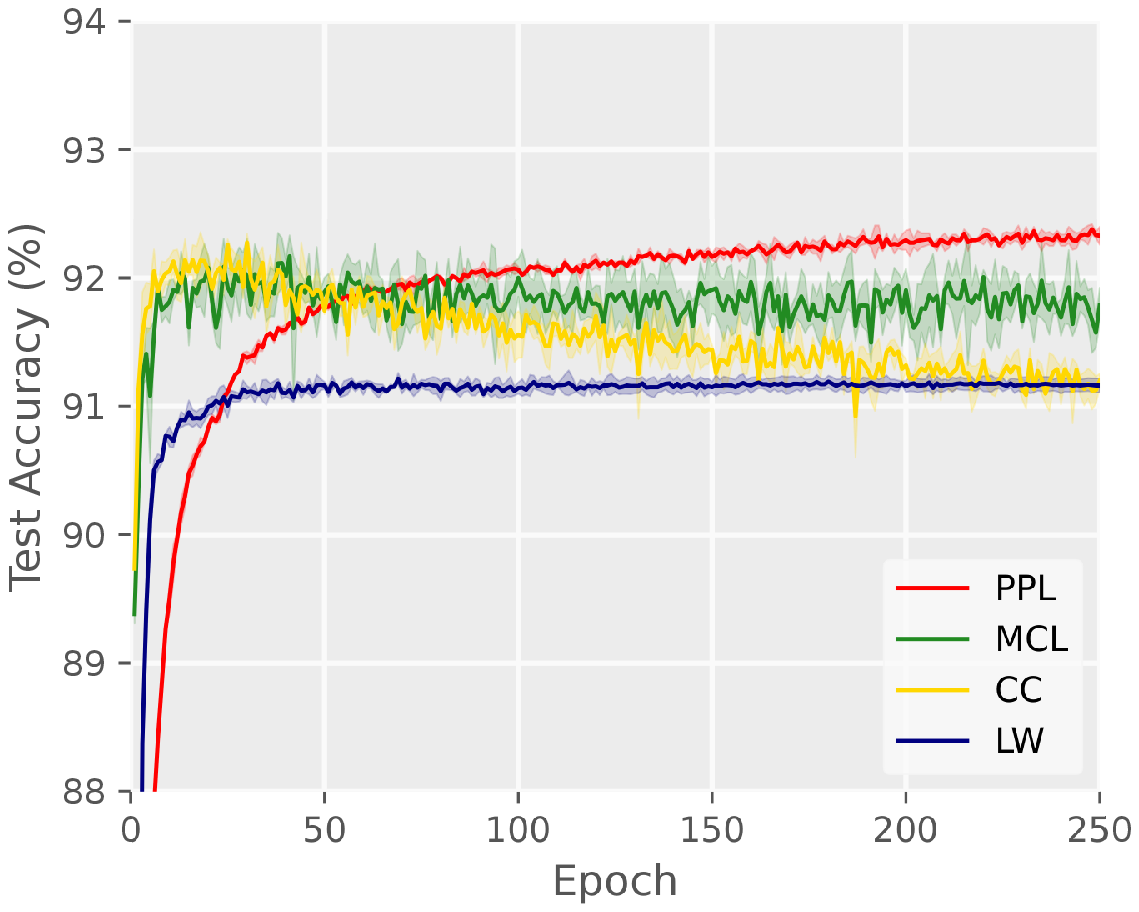}
			\centerline{\quad MNIST, $\alpha=0.9$}
	\end{minipage}}
	\subfigure{
		\begin{minipage}[b]{0.24\columnwidth}
			\centering
			\includegraphics[width=1.5in]{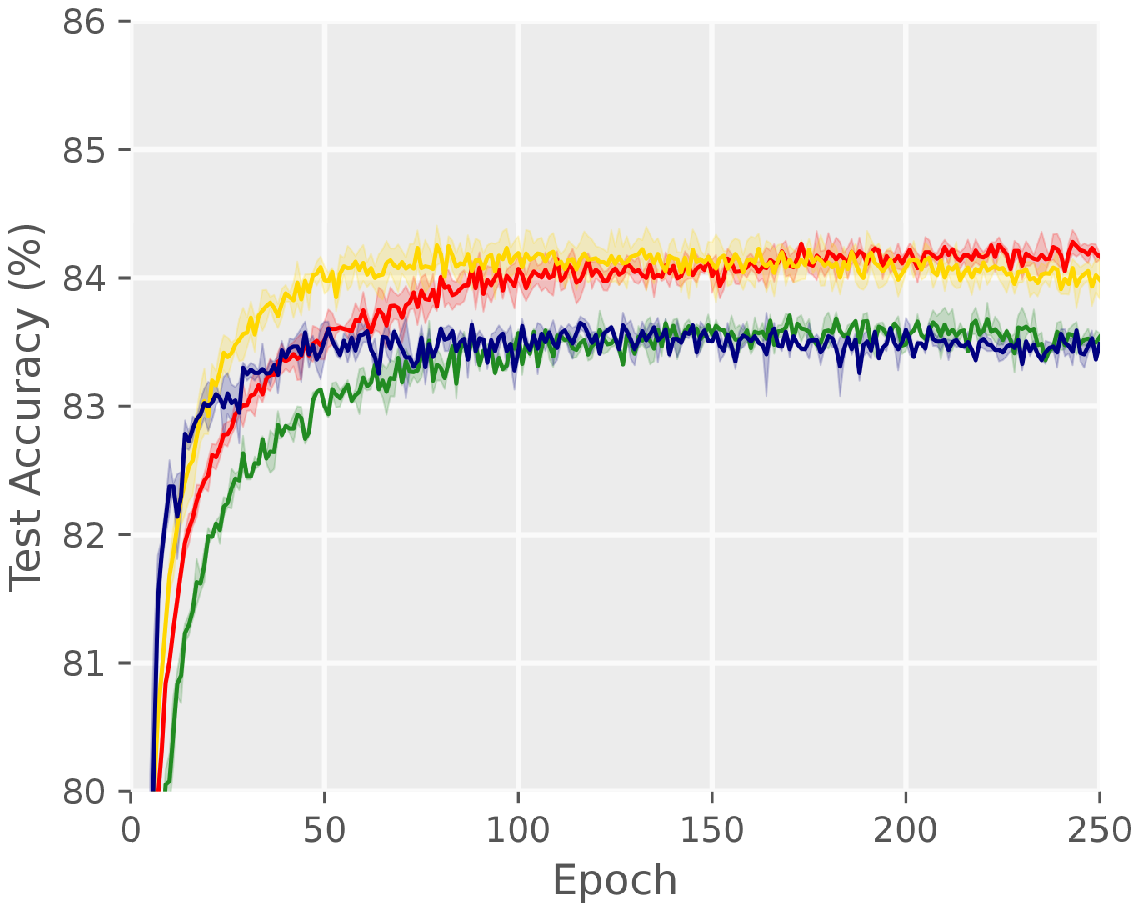}
			\centerline{\quad Fashion, $\alpha=0.9$}
	\end{minipage}}
	\subfigure{
		\begin{minipage}[b]{0.23\columnwidth}
			\centering
			\includegraphics[width=1.5in]{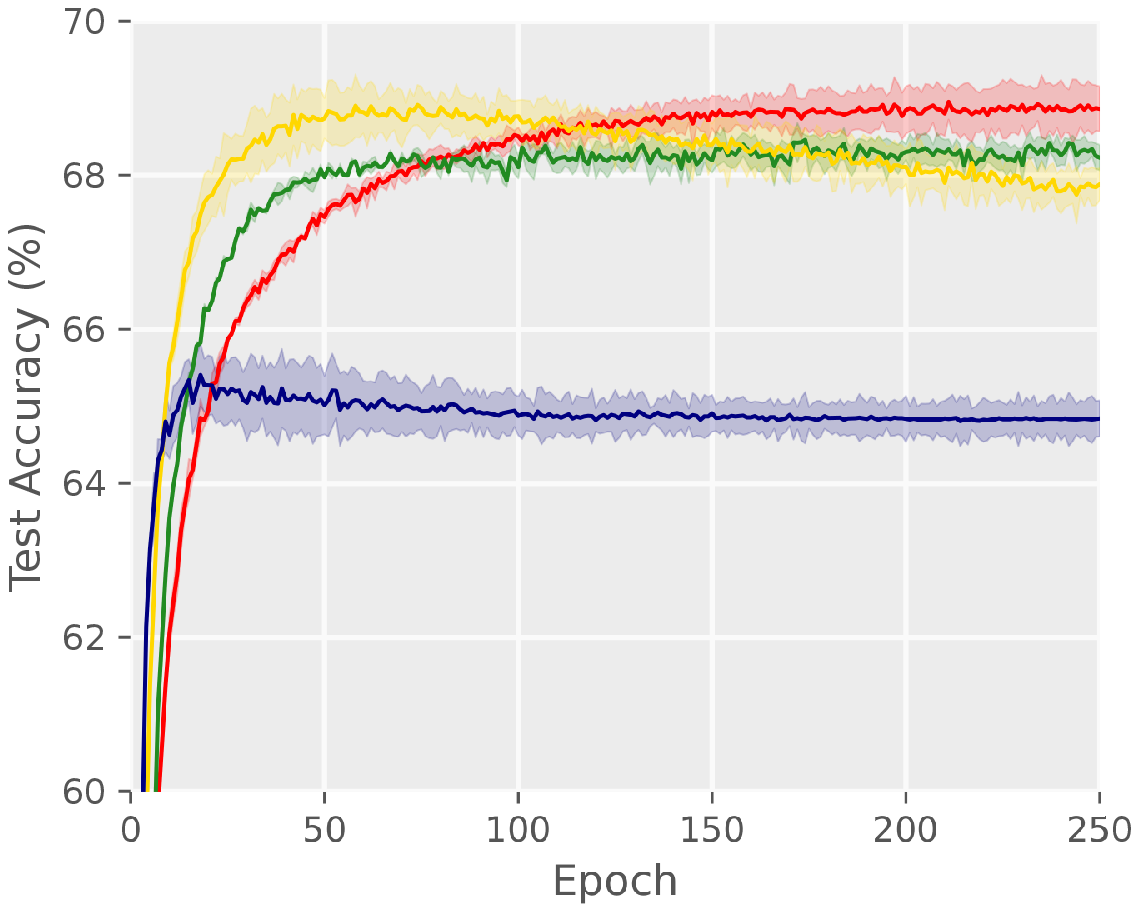}
			\centerline{\quad Kuzushiji, $\alpha=0.9$}
	\end{minipage}}
	\subfigure{
		\begin{minipage}[b]{0.23\columnwidth}
			\centering
			\includegraphics[width=1.5in]{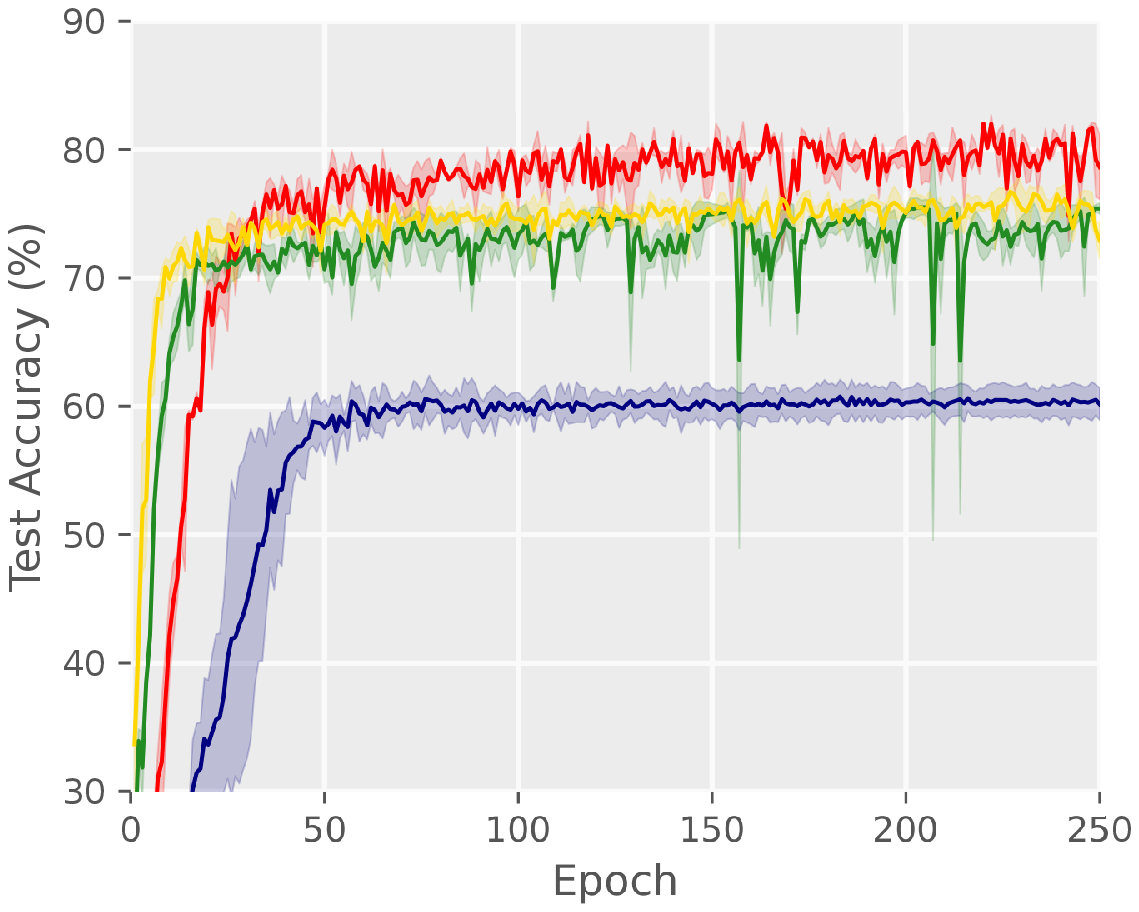}
			\centerline{\quad CIFAR-10, $\alpha=0.9$}
	\end{minipage}}
	
	\subfigure{
		\begin{minipage}[b]{0.24\columnwidth}
			\centering
			\includegraphics[width=1.5in]{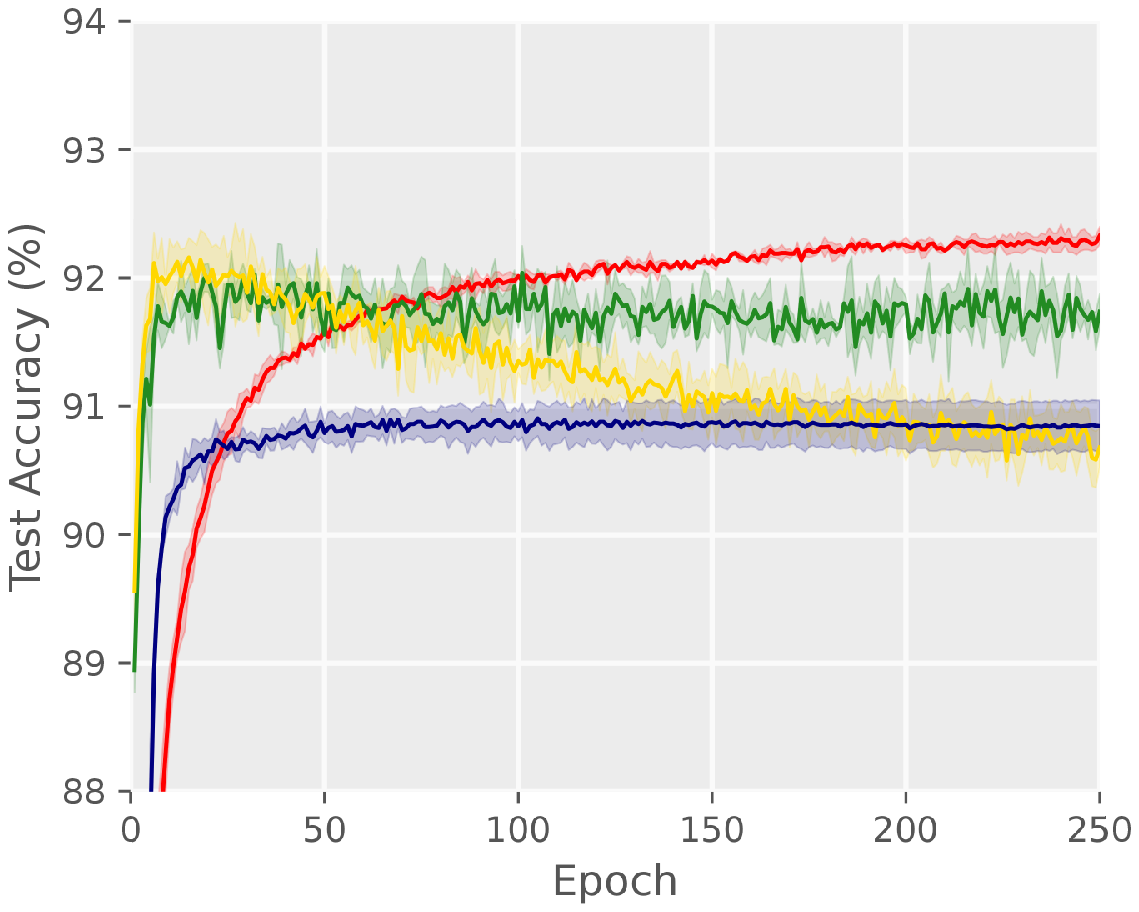}
			\centerline{\quad MNIST, $\alpha=0.8$}
	\end{minipage}}
	\subfigure{
		\begin{minipage}[b]{0.24\columnwidth}
			\centering
			\includegraphics[width=1.5in]{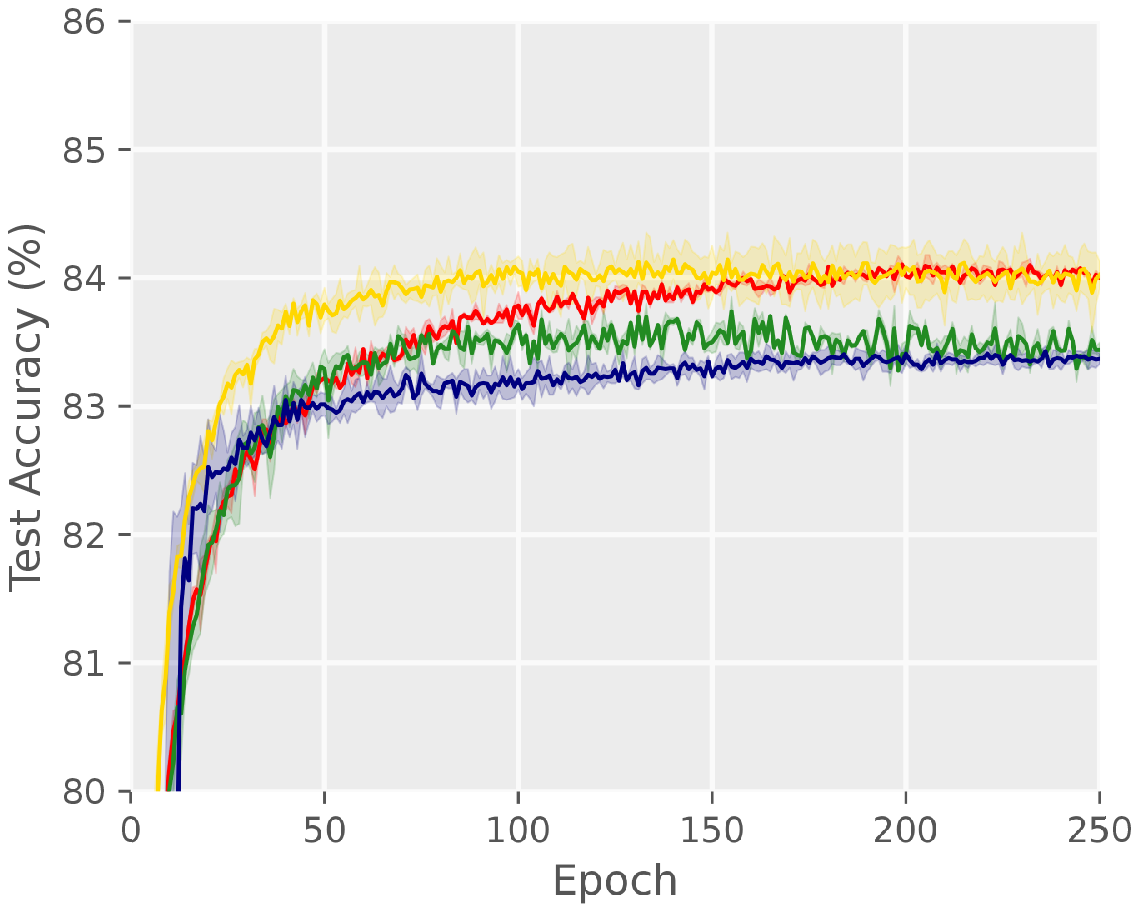}
			\centerline{\quad Fashion, $\alpha=0.8$}
	\end{minipage}}
	\subfigure{
		\begin{minipage}[b]{0.23\columnwidth}
			\centering
			\includegraphics[width=1.5in]{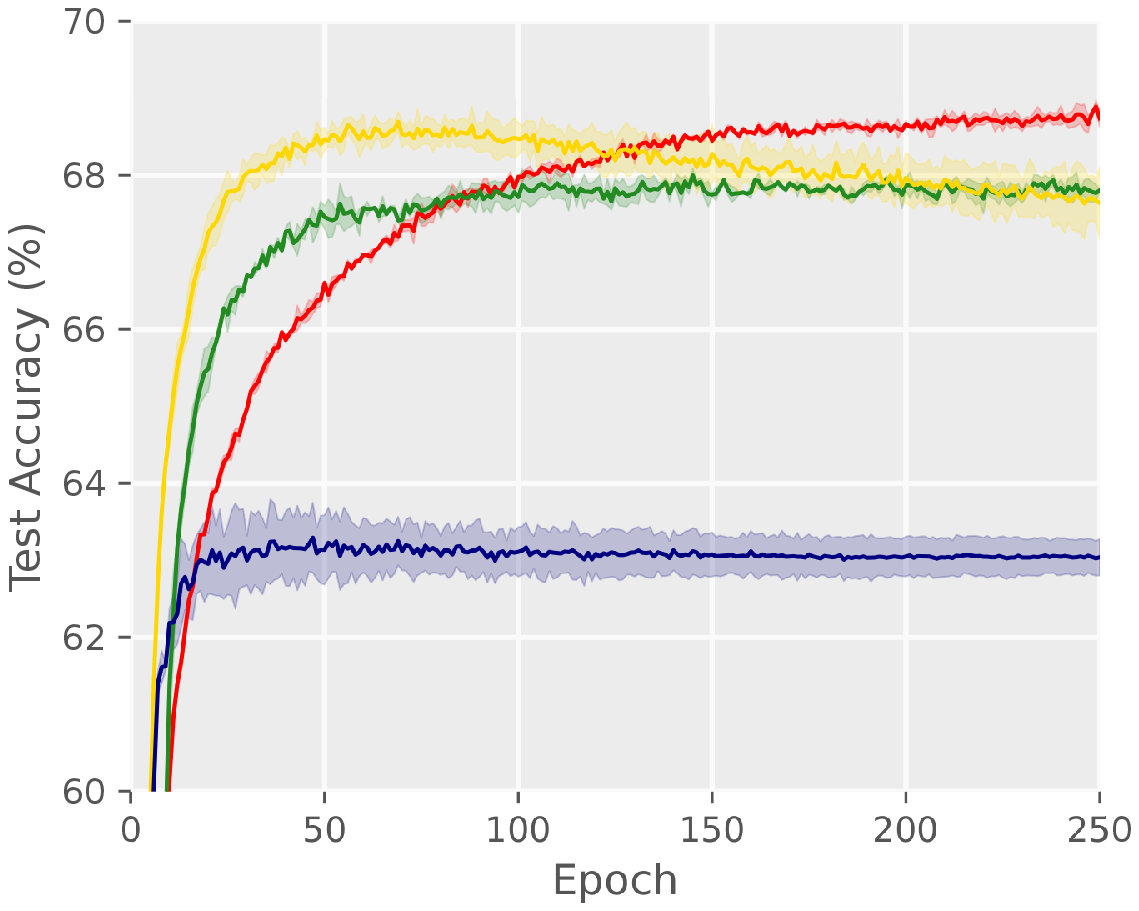}
			\centerline{\quad Kuzushiji, $\alpha=0.8$}
	\end{minipage}}
	\subfigure{
		\begin{minipage}[b]{0.23\columnwidth}
			\centering
			\includegraphics[width=1.5in]{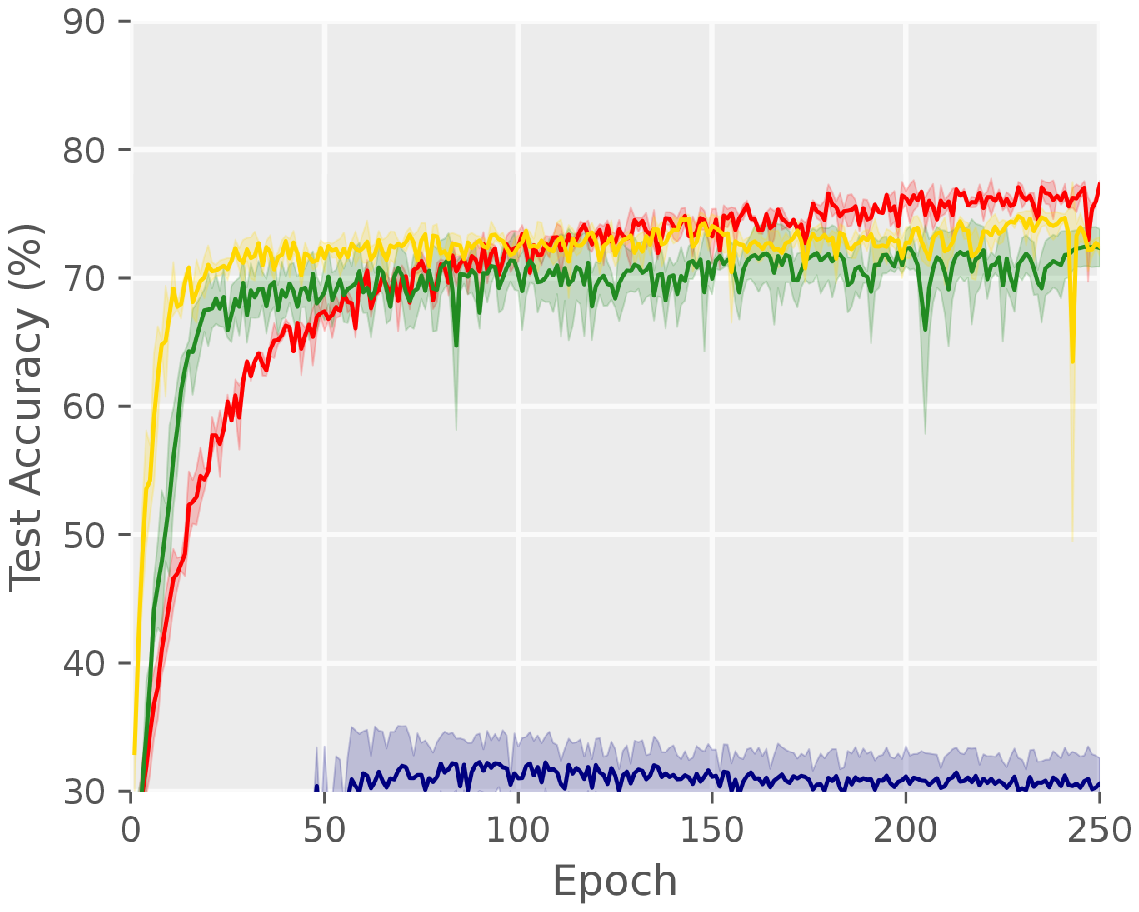}
			\centerline{\quad CIFAR-10, $\alpha=0.8$}
	\end{minipage}}
	
	\subfigure{
		\begin{minipage}[b]{0.24\columnwidth}
			\centering
			\includegraphics[width=1.5in]{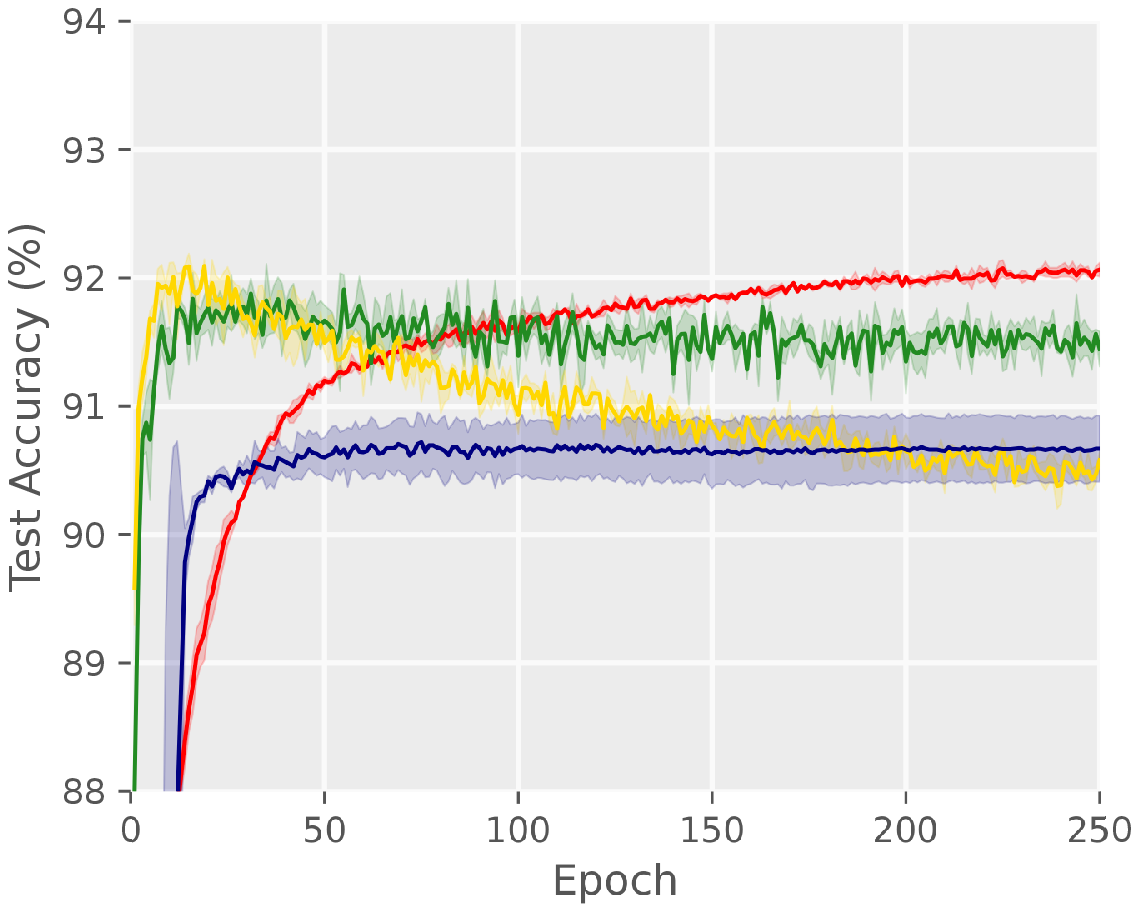}
			\centerline{\quad MNIST, $\alpha=0.7$}
	\end{minipage}}
	\subfigure{
		\begin{minipage}[b]{0.24\columnwidth}
			\centering
			\includegraphics[width=1.5in]{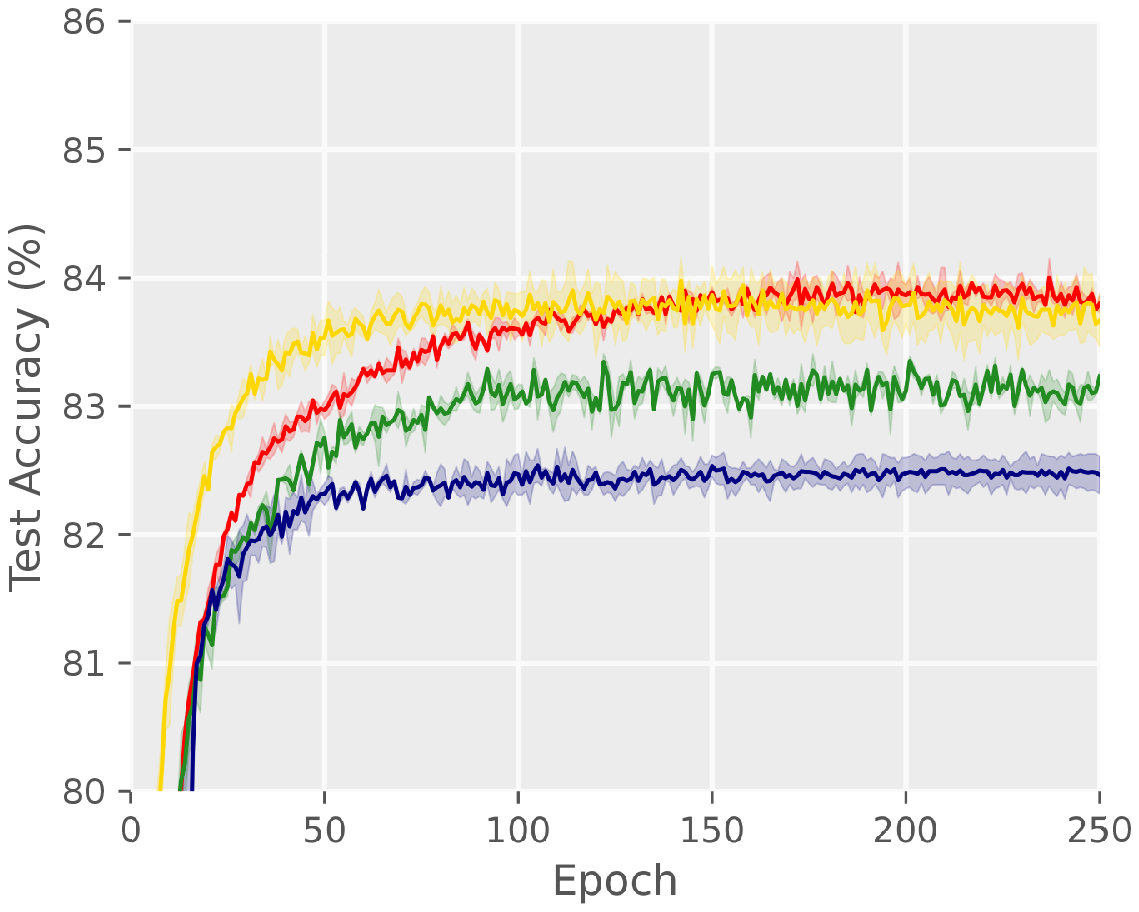}
			\centerline{\quad Fashion, $\alpha=0.7$}
	\end{minipage}}
	\subfigure{
		\begin{minipage}[b]{0.23\columnwidth}
			\centering
			\includegraphics[width=1.5in]{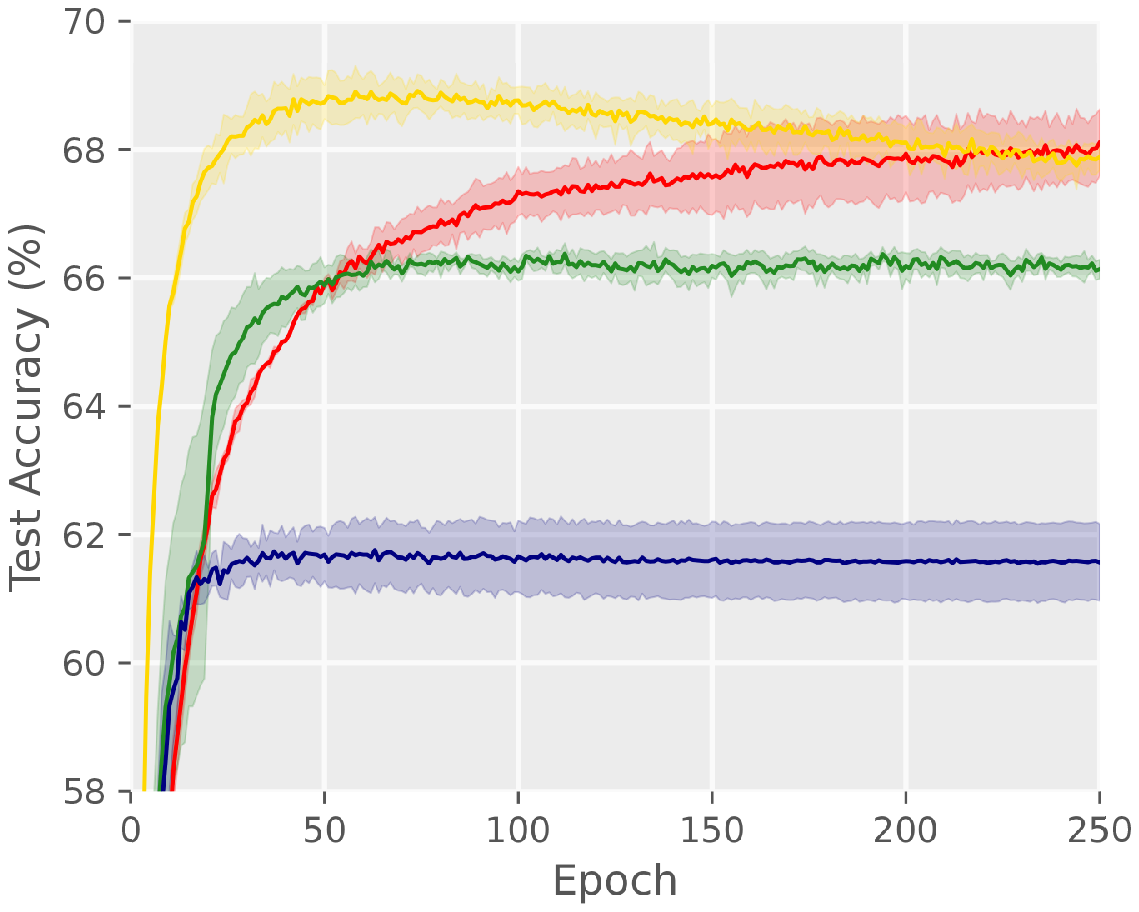}
			\centerline{\quad Kuzushiji, $\alpha=0.7$}
	\end{minipage}}
	\subfigure{
		\begin{minipage}[b]{0.23\columnwidth}
			\centering
			\includegraphics[width=1.5in]{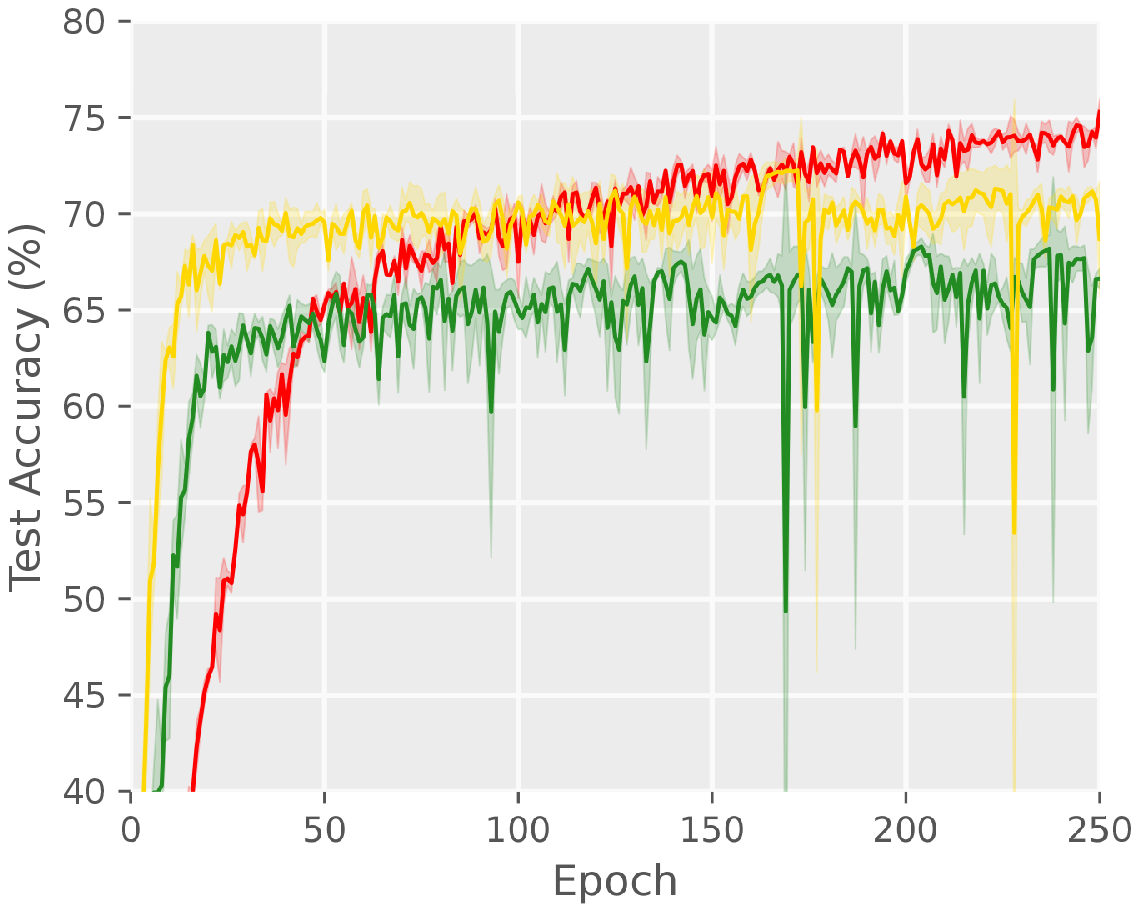}
			\centerline{\quad CIFAR-10, $\alpha=0.7$}
	\end{minipage}}
	
	\caption{Test accuracy with different $\alpha$. The linear model was trained on MNIST, Fashion-MNIST, and Kuzushiji-MNIST, and ResNet was trained on CIFAR-10. Dark colors represent the mean accuracy of 5 trials and light colors represent the standard error.}
	\label{fres1}
\end{figure*}

\begin{figure*}[!t]
	\vskip 0.2in
	\subfigure{
		\begin{minipage}[b]{0.25\columnwidth}
			\centering
			\includegraphics[width=1.52in]{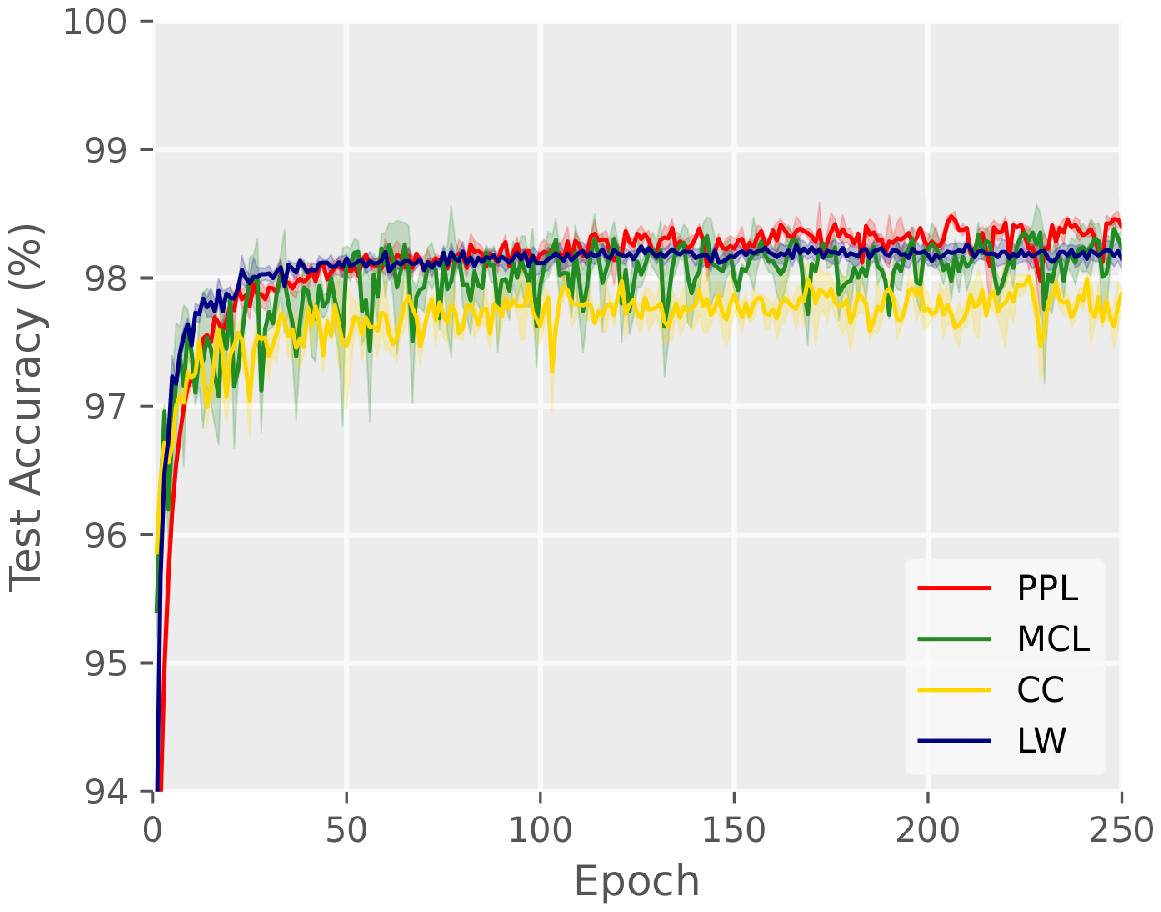}
			\centerline{\quad MNIST, $\alpha=0.9$}
	\end{minipage}}
	\subfigure{
		\begin{minipage}[b]{0.24\columnwidth}
			\centering
			\includegraphics[width=1.5in]{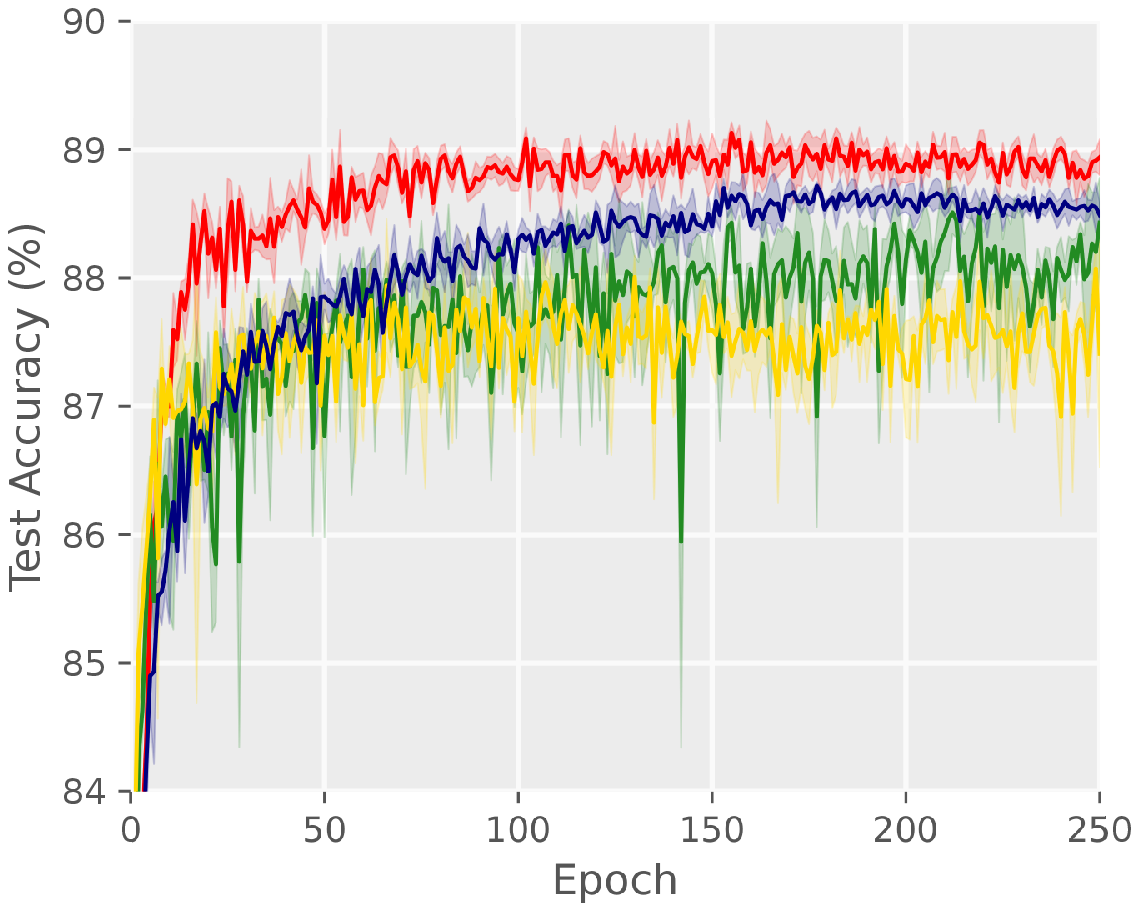}
			\centerline{\quad Fashion, $\alpha=0.9$}
	\end{minipage}}
	\subfigure{
		\begin{minipage}[b]{0.23\columnwidth}
			\centering
			\includegraphics[width=1.5in]{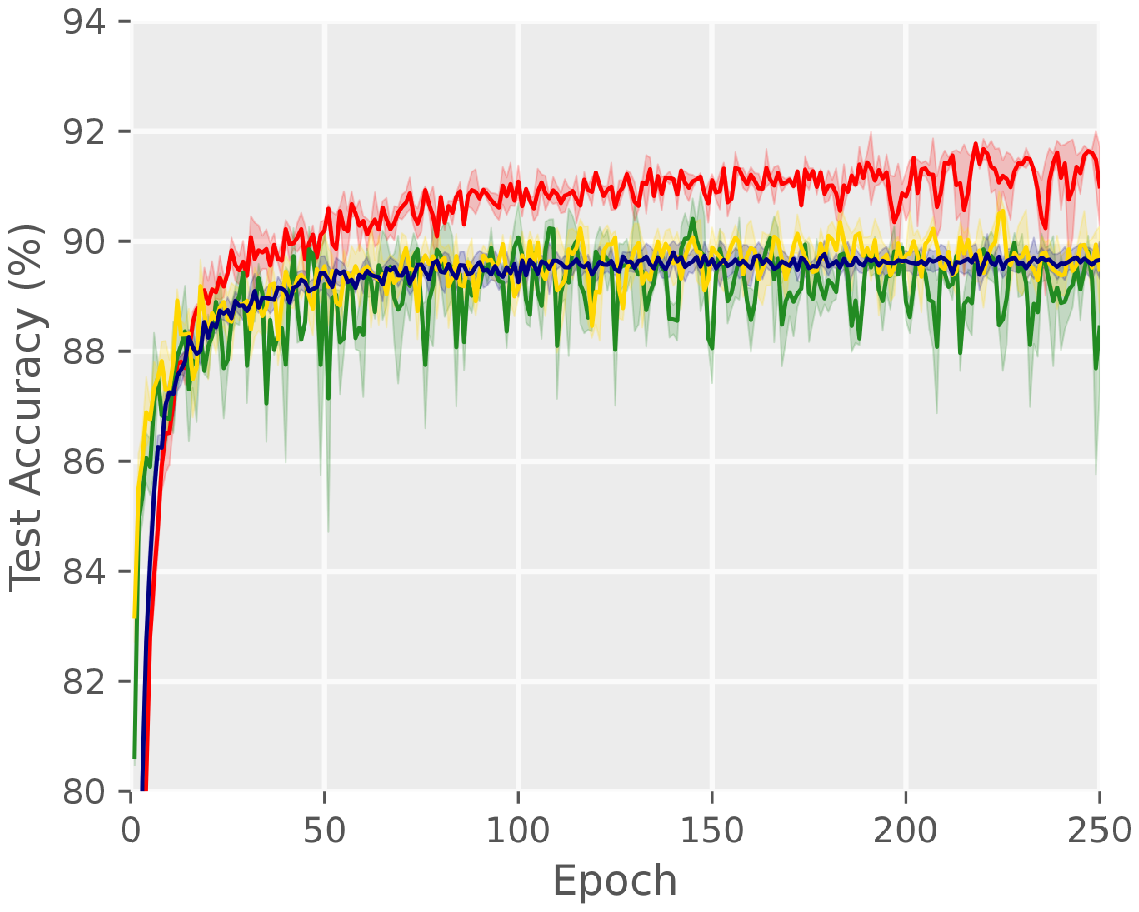}
			\centerline{\quad Kuzushiji, $\alpha=0.9$}
	\end{minipage}}
	\subfigure{
		\begin{minipage}[b]{0.23\columnwidth}
			\centering
			\includegraphics[width=1.5in]{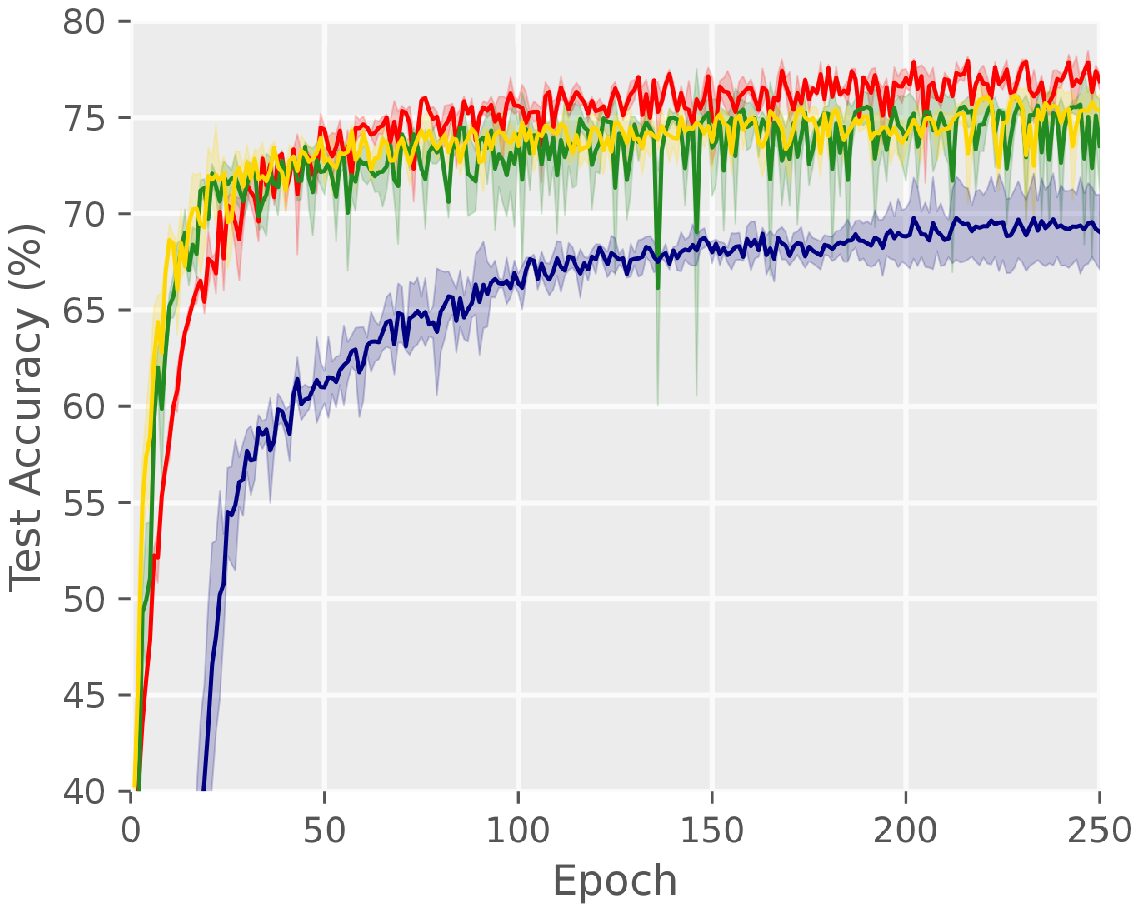}
			\centerline{\quad CIFAR-10, $\alpha=0.9$}
	\end{minipage}}
	
	\subfigure{
		\begin{minipage}[b]{0.25\columnwidth}
			\centering
			\includegraphics[width=1.52in]{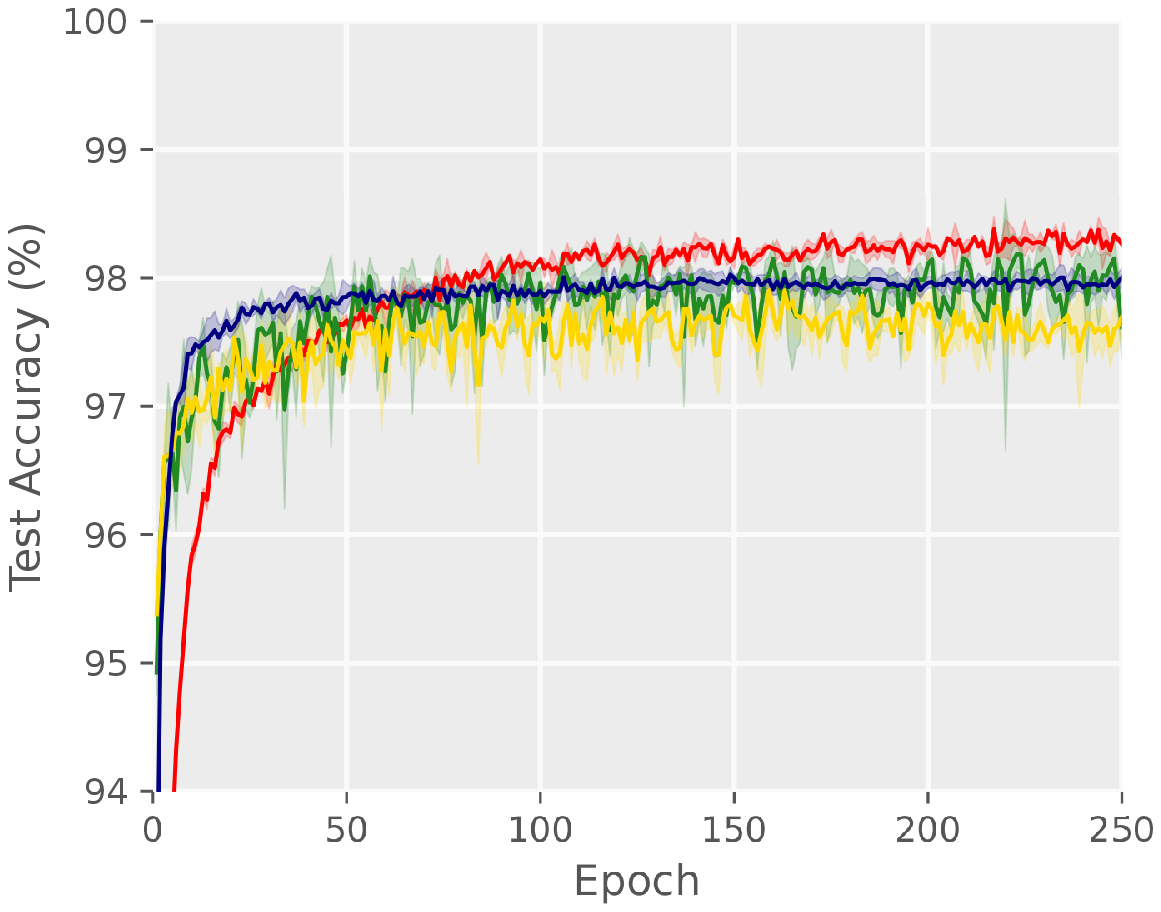}
			\centerline{\quad MNIST, $\alpha=0.8$}
	\end{minipage}}
	\subfigure{
		\begin{minipage}[b]{0.24\columnwidth}
			\centering
			\includegraphics[width=1.5in]{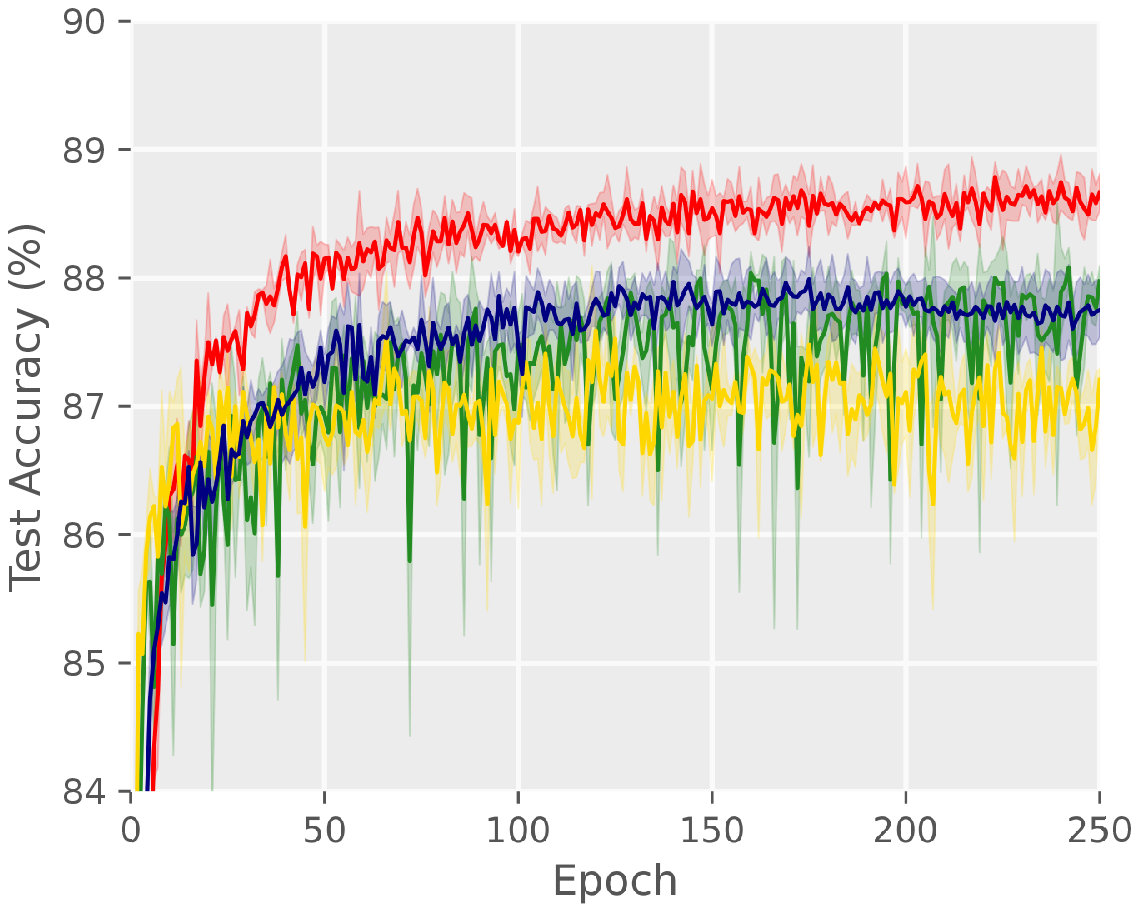}
			\centerline{\quad Fashion, $\alpha=0.8$}
	\end{minipage}}
	\subfigure{
		\begin{minipage}[b]{0.23\columnwidth}
			\centering
			\includegraphics[width=1.5in]{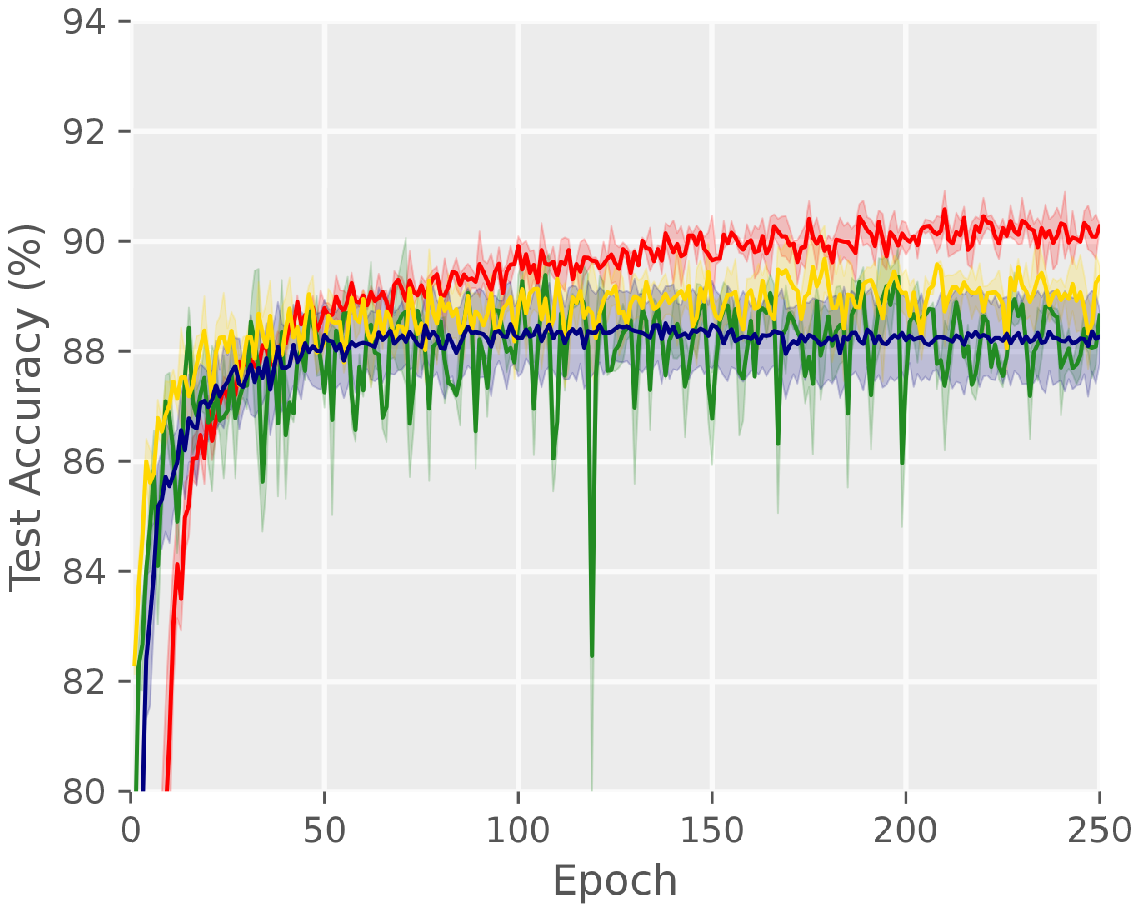}
			\centerline{\quad Kuzushiji, $\alpha=0.8$}
	\end{minipage}}
	\subfigure{
		\begin{minipage}[b]{0.23\columnwidth}
			\centering
			\includegraphics[width=1.5in]{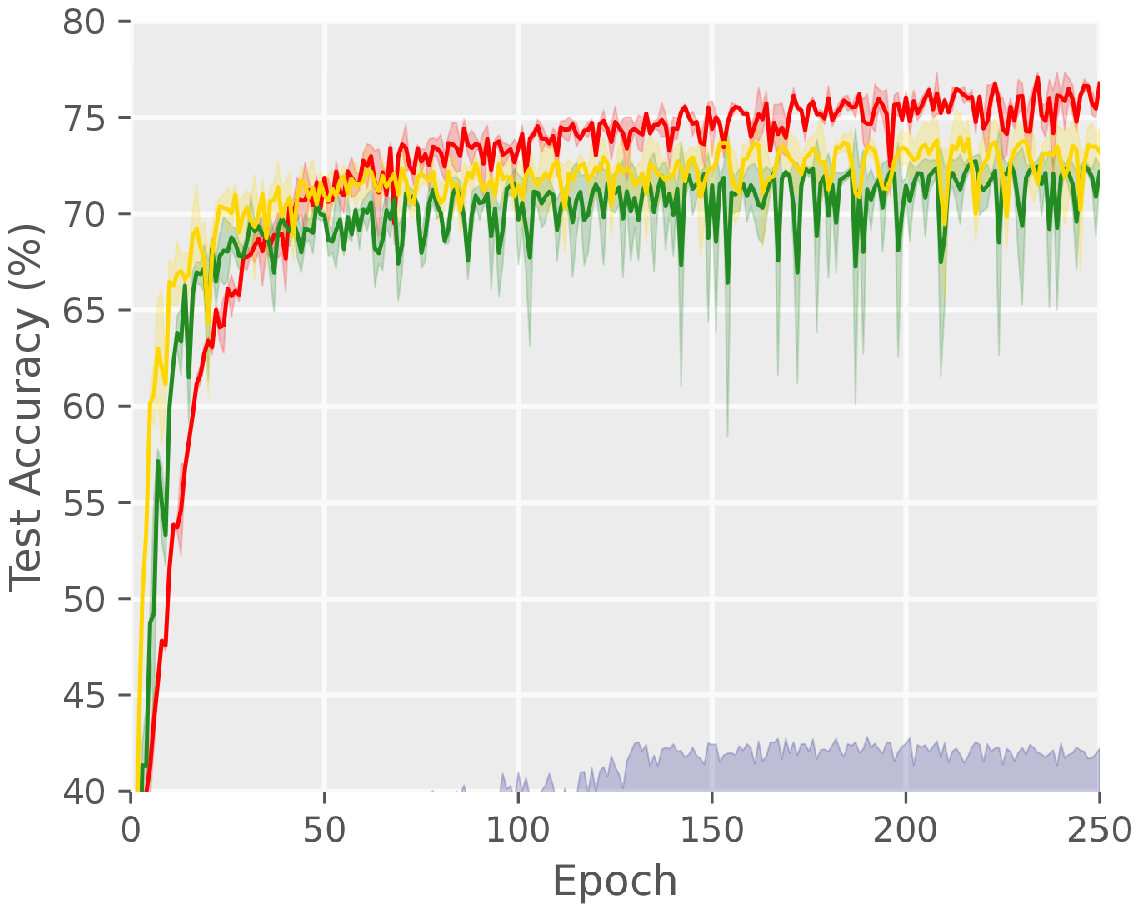}
			\centerline{\quad CIFAR-10, $\alpha=0.8$}
	\end{minipage}}
	
	\subfigure{
		\begin{minipage}[b]{0.24\columnwidth}
			\centering
			\includegraphics[width=1.52in]{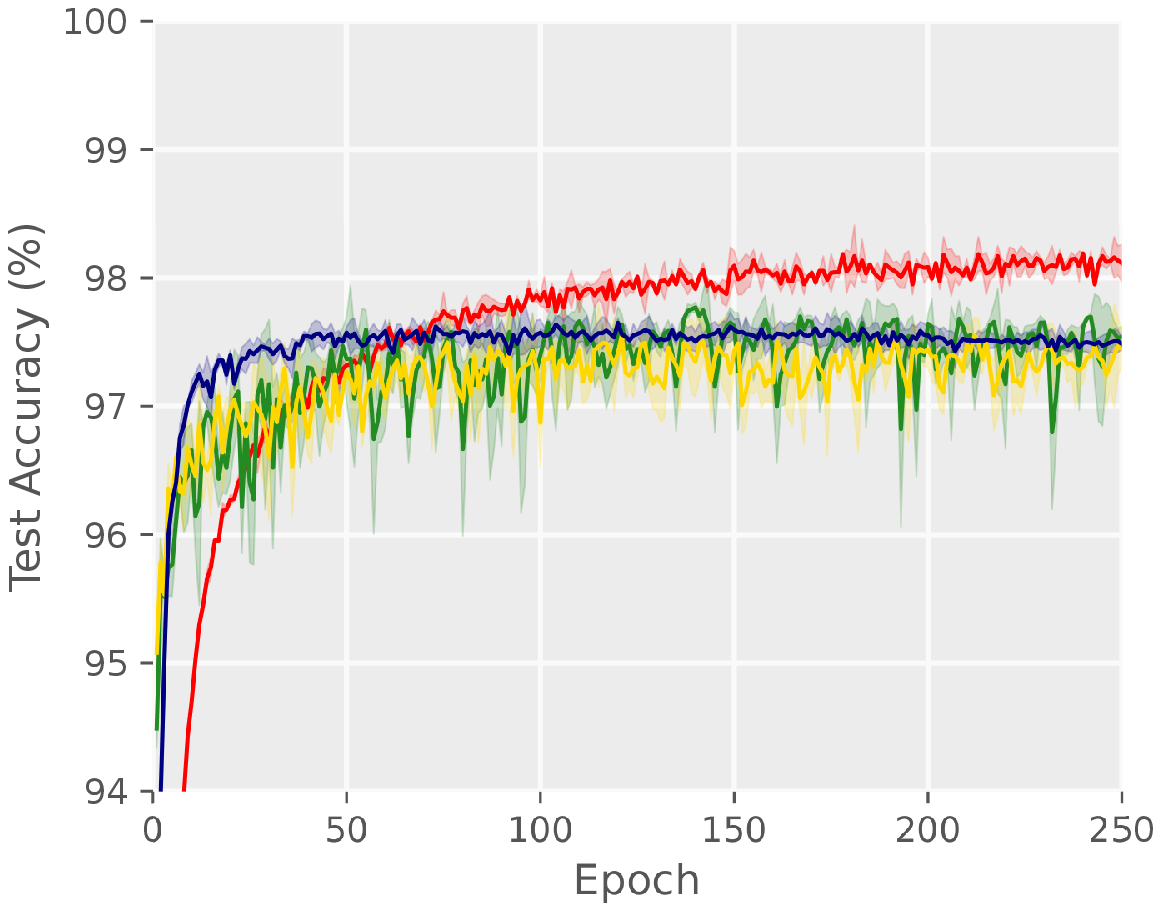}
			\centerline{\quad MNIST, $\alpha=0.7$}
	\end{minipage}}
	\subfigure{
		\begin{minipage}[b]{0.24\columnwidth}
			\centering
			\includegraphics[width=1.5in]{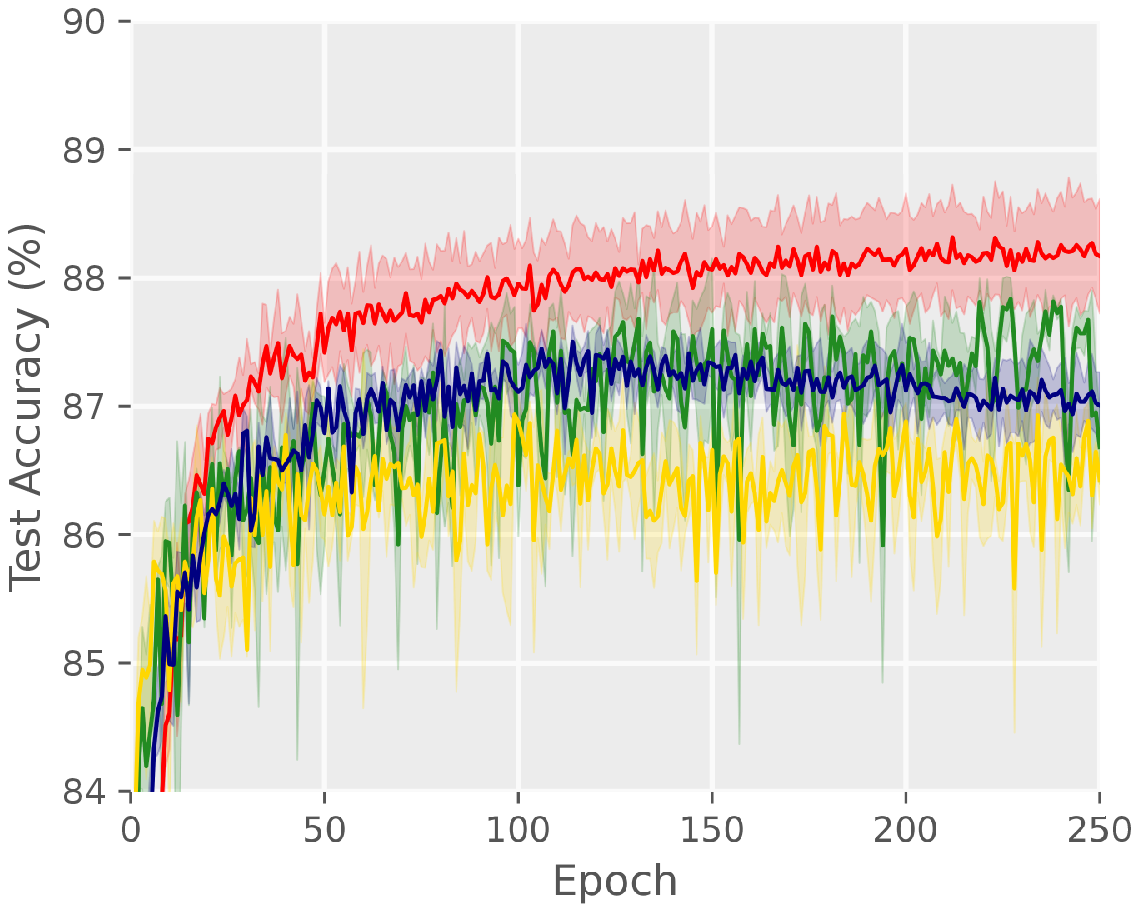}
			\centerline{\quad Fashion, $\alpha=0.7$}
	\end{minipage}}
	\subfigure{
		\begin{minipage}[b]{0.23\columnwidth}
			\centering
			\includegraphics[width=1.5in]{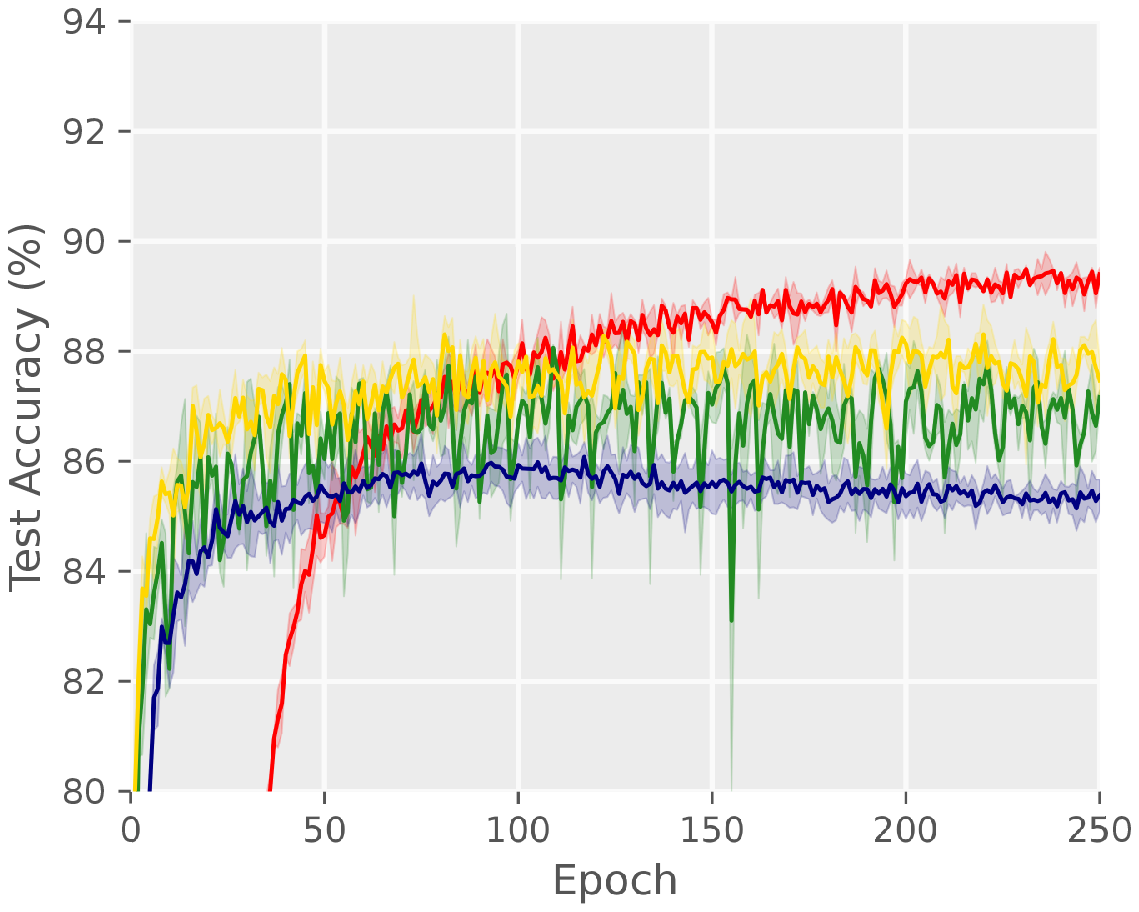}
			\centerline{\quad Kuzushiji, $\alpha=0.7$}
	\end{minipage}}
	\subfigure{
		\begin{minipage}[b]{0.23\columnwidth}
			\centering
			\includegraphics[width=1.5in]{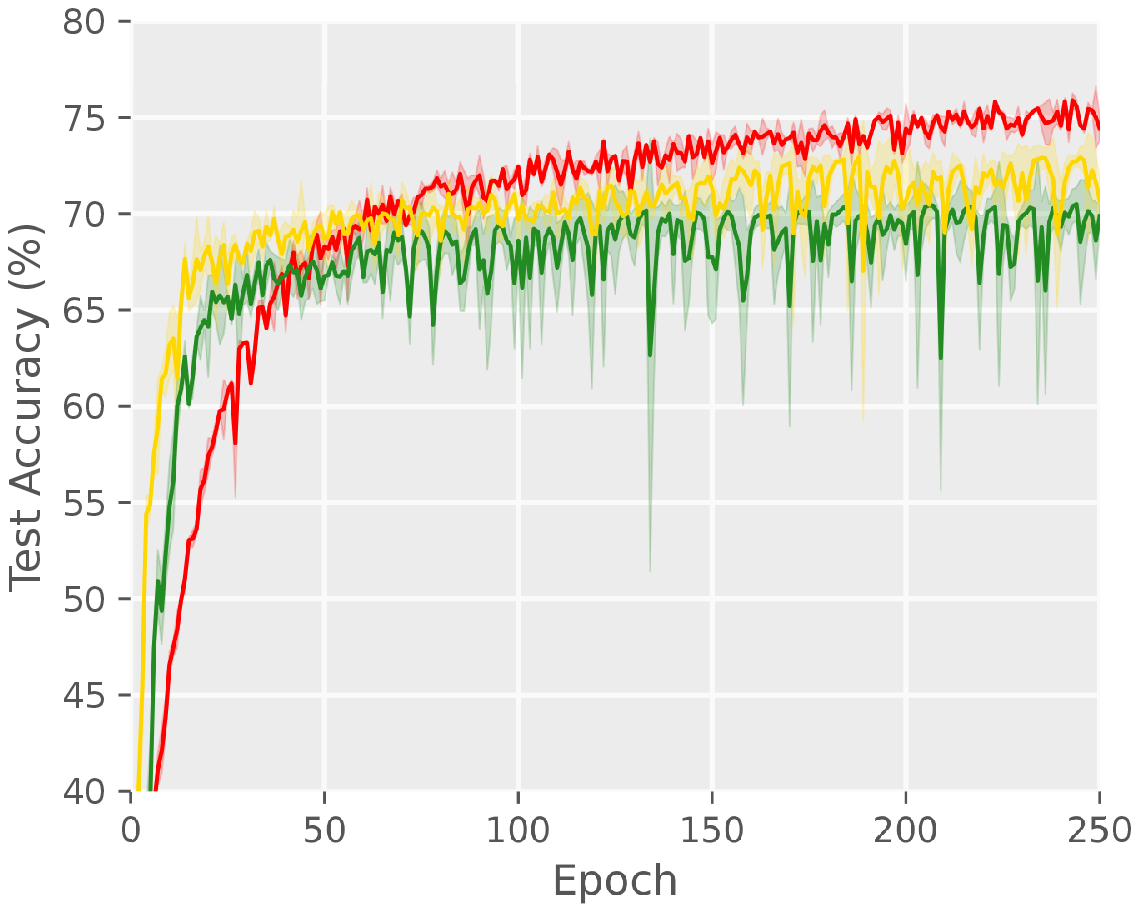}
			\centerline{\quad CIFAR-10, $\alpha=0.7$}
	\end{minipage}}
	
	\caption{Test accuracy with different $\alpha$. MLP was trained on MNIST, Fashion-MNIST, and Kuzushiji-MNIST, and DenseNet was trained on CIFAR-10. Dark colors represent the mean accuracy over 5 trials and light colors represent the standard error.}
	\label{fres2}
\end{figure*}

\paragraph{Baseline Methods.}
To evaluate the performance of the proposed PPL algorithm, we compared it with MCL, a special practice included in the general PPL framework, and two state-of-the-art consistent PL methods, the classifier-consistent (CC) method~\citep{feng2020provably} and the leveraged weighted (LW) method~\citep{wen2021leverage}.
The hyper-parameters were searched through five-fold cross-validation under the suggested settings in the original papers.
For MCL and LW, we used the log loss function and the cross-entropy loss function which achieved the overall best performance in the original papers.

\paragraph{Partial Label Generation Process.}

In the experiment of PCPL~\citep{feng2020provably}, the partial label $s$ for each instance was uniformly sampled from ${\mathcal{S}}$ given $y\in s$. The resulting distribution of $|s|$ is symmetric. The same generation process was also adopted in the experiment of MCL~\citep{feng2020learning} to generate the complementary label set $\bar{s}$. However, in most real-world cases, smaller complementary label sets are easier to collect. Motivated by this observation, we consider the $\alpha$-skewed distribution: $\bar{Q}_{k+1} = \alpha \bar{Q}_{k}, 0\leq k \leq K-1$. Here, $\bar{Q}_k := P(|\bar{s}|=k)$. Figure~\ref{skewandmcl} compares the PCPL distribution with the $\alpha$-skewed distribution.

Next, we define the average partial label size as $\mathbb{E}\left[|s|\right]$, which can measure the ambiguity and the hardness of PL problems. We summarize the average partial label size for different distributions in Table~\ref{averagesize}.

We can see that the ambiguity of PL problems increases as $\alpha$ decreases. Based on the $\alpha$-skewed distribution, we consider the following partial label generation process:
\begin{enumerate}
\item Instantiate $\bar{Q}_k$ according to $\alpha$;
\item For each instance $\boldsymbol{x}$, we first sample the size $|\bar{s}|$ from $\bar{Q}$. Then, we uniformly sample a complementary label set $\bar{s}$ with size $|\bar{s}|$. Finally, a partial label $s$ is generated by $s = {\mathcal{Y}}\setminus\bar{s}.$
\end{enumerate}

\begin{table}
	\caption{Empirical risk (mean$\pm$standard error) over 5 trials with different $\alpha$. The linear model was trained on MNIST, Fashion-MNIST, and Kuzushiji-MNIST, and ResNet was trained on CIFAR-10. Boldfaced numbers denote the method with the smallest standard error.}
	\vspace{0.2cm}
	\label{LossResult}
	\centering
	\resizebox{\columnwidth}{!}{
		\begin{tabular}{lccccc}
			\toprule
			Dataset & $\alpha$ & 
			MCL & PPL & 
			CC & LW
			\\ 
			\midrule
			\multirowcell{3}{MNIST}
			& 0.9 & 0.234$\pm$0.012 & 
			0.256$\pm$0.007  &
			\textbf{0.120$\pm$0.005}  & 
			0.193$\pm$0.008 \\ 
			
			& 0.8 & 0.231$\pm$0.012 & 
			\textbf{0.250$\pm$0.006}  &
			0.116$\pm$0.009 & 
			0.179$\pm$0.006  \\ 
			
			& 0.7  & 0.223$\pm$0.016 & 
			0.247$\pm$0.005  &
			\textbf{0.070$\pm$0.003} & 
			0.182$\pm$0.016  \\ 
			\hline
			\multirowcell{3}{Fashion-\\MNIST}      
			& 0.9 & 0.368$\pm$0.007 & 
			\textbf{0.409$\pm$0.003}  &
			0.235$\pm$0.008 & 
			0.248$\pm$0.011  \\ 
			& 0.8 & 0.354$\pm$0.009 & 
			\textbf{0.404$\pm$0.005}  &
			0.207$\pm$0.010 & 
			0.165$\pm$0.008  \\ 
			& 0.7 & 0.356$\pm$0.018 & 
			\textbf{0.412$\pm$0.005}  &
			0.223$\pm$0.006 & 
			0.195$\pm$0.007  \\ 
			\hline
			\multirowcell{3}{Kuzushiji-\\MNIST}      
			& 0.9 & 0.447$\pm$0.010 & 
			\textbf{0.539$\pm$0.003}  &
			0.270$\pm$0.005 & 
			0.308$\pm$0.010 \\ 
			& 0.8 & 0.432$\pm$0.009 & 
			0.530$\pm$0.005  &
			\textbf{0.228$\pm$0.004} & 
			0.297$\pm$0.012  \\ 
			& 0.7 & 0.431$\pm$0.011 & 
			\textbf{0.528$\pm$0.003}  &
			0.201$\pm$0.006 & 
			0.252$\pm$0.019    \\ 
			\hline
			\multirowcell{3}{CIFAR-10}      
			& 0.9 & 1.437$\pm$0.087 & 
			\textbf{0.750$\pm$0.040}  &
			0.928$\pm$0.048 & 
			0.589$\pm$0.081  \\ 
			& 0.8 & 1.981$\pm$0.160 & 
			\textbf{0.841$\pm$0.038}  &
			0.832$\pm$0.051 & 
			-  \\ 
			& 0.7 & 1.123$\pm$0.092 & 
			\textbf{0.893$\pm$0.022}  &
			0.882$\pm$0.037 & 
			-  \\
			\bottomrule\end{tabular}}
	
\end{table}

\begin{table}
	\caption{Empirical risk (mean$\pm$standard error) over 5 trials with different $\alpha$. MLP was trained on MNIST, Fashion-MNIST, and Kuzushiji-MNIST, and DenseNet was trained on CIFAR-10. Boldfaced numbers denote the method with the smallest standard error.}
	\vspace{0.2cm}
	\label{LossResult2}
	\centering
	\resizebox{\columnwidth}{!}{
		\begin{tabular}{lccccc}
			\toprule
			Dataset & $\alpha$ & 
			MCL & PPL & 
			CC & LW
			\\ 
			\midrule
			\multirowcell{3}{MNIST}
			& 0.9 & 0.073$\pm$0.005 & 
			0.069$\pm$0.010  &
			\textbf{0.043$\pm$0.004}  & 
			0.057$\pm$0.006 \\ 
			
			& 0.8 & 0.085$\pm$0.028 & 
			\textbf{0.109$\pm$0.004} &
			0.077$\pm$0.005 & 
			0.061$\pm$0.004  \\ 
			
			& 0.7  & 0.097$\pm$0.015 & 
			0.135$\pm$0.005  &
			\textbf{0.044$\pm$0.002} & 
			0.049$\pm$0.007  \\ 
			\hline
			
			\multirowcell{3}{Fashion-\\MNIST}      
			& 0.9 & 0.533$\pm$0.048 & 
			0.244$\pm$0.011  &
			\textbf{0.161$\pm$0.005} & 
			0.148$\pm$0.006  \\ 
			& 0.8 & 0.481$\pm$0.057 & 
			\textbf{0.232$\pm$0.004}  &
			0.134$\pm$0.004 & 
			0.150$\pm$0.004  \\ 
			& 0.7 & 0.549$\pm$0.059 & 
			\textbf{0.232$\pm$0.004}  &
			0.148$\pm$0.009 & 
			0.101$\pm$0.005  \\ 
			\hline
			\multirowcell{3}{Kuzushiji-\\MNIST}      
			& 0.9 & 0.111$\pm$0.013 & 
			0.128$\pm$0.017 &
			0.102$\pm$0.015  & 
			\textbf{0.090$\pm$0.011} \\ 
			& 0.8 & 0.130$\pm$0.011 & 
			\textbf{0.139$\pm$0.004} &
			0.086$\pm$0.009 & 
			0.073$\pm$0.011  \\ 
			& 0.7 & 0.171$\pm$0.052 & 
			\textbf{0.167$\pm$0.005}  &
			0.095$\pm$0.009 & 
			0.044$\pm$0.018    \\ 
			\hline
			\multirowcell{3}{CIFAR-10}      
			& 0.9 & 1.362$\pm$0.120 & 
			\textbf{0.612$\pm$0.030}  &
			0.553$\pm$0.047 & 
			0.555$\pm$0.045  \\ 
			& 0.8 & 1.330$\pm$0.232 & 
			\textbf{0.655$\pm$0.020}  &
			0.528$\pm$0.033 & 
			-  \\ 
			& 0.7 & 1.660$\pm$0.087 & 
			\textbf{0.708$\pm$0.015}  &
			0.431$\pm$0.039 & 
			-  \\
			\bottomrule\end{tabular}}
	
\end{table}

\paragraph{Experimental Results.}

We randomly sampled 10\% of the training set to construct a validation set. 
We selected the learning rate and weight decay from the set $\{10^{-6}, 10^{-5}, \ldots, 10^{-1}\}$ to achieve the best validation score. The mini-batch size was set to 256 and the epoch number was set to 250. 
The test performance was evaluated based on 5 trials on the four benchmark datasets.
We recorded the means and standard deviations of the test accuracy with different $\alpha$ were shown in Figure~\ref{fres1} and Figure~\ref{fres2}.

From the figures, we can see that PPL is always the best method or at least comparable to the best method with all the datasets and models.
We also notice that the test accuracy decreased as we decreased $\alpha$, which accords with our conjecture that the average partial label size can be used to measure the hardness of the PL problems. 
As the hardness of learning increases, the advantages of PPL become more apparent, while the performance of other baselines is greatly reduced, especially LW.
When the PL problem is too difficult (CIFAR-10, $\alpha=0.7$), LW learns little useful information, so not shown in the corresponding figures.
    
We further recorded the variance of the empirical risk of different methods in Table~\ref{LossResult} and Table~\ref{LossResult2}, and highlighted the results with the smallest standard deviation in boldface. 
As shown in the tables, the risk variance of PPL was the smallest in most cases. 
This goes some way to explaining the more robust risk estimator of PPL in comparison to other consistent PL methods, leading to stable performance.

\section*{Conclusions}
We presented a unified framework for \emph{learning from proper partial labels} which accommodates many previous problems settings as special cases and is substantially more general. We derived an unbiased estimator of the classification risk for proper partial label learning problems and theoretically established an estimation error bound for the proposed method. We also demonstrated the effectiveness of the proposed method through experiments on benchmark datasets.

\subsubsection*{Acknowledgments}
We thank Nontawat Charoenphakdee for helpful discussion. MS was supported by JST AIP Acceleration Research Grant Number JPMJCR20U3, Japan and the Institute for AI and Beyond, UTokyo.


\appendix
\section*{Appendix}
\everymath{\displaystyle}


\section{Proofs}\label{Appendix:Proofs}
\subsection{Proof of Proposition~\ref{propCondInd}}
If~\eqref{cmm1} and~\eqref{cmm2} hold, we can easily see that~\eqref{cmmAssumption} holds. Conversely, if~\eqref{cmmAssumption} holds, we have
 \begin{align*}
     p(s|y) &= \int_{{\mathcal{X}}}{ p(\boldsymbol{x}, s|y)\mathrm{d}\boldsymbol{x}} = \int_{{\mathcal{X}}}p(s|\boldsymbol{x}, y)p(\boldsymbol{x}|y)\mathrm{d}\boldsymbol{x}
     \\
     &= \int_{{\mathcal{X}}}C(s)\boldsymbol{1}\{y\in s\}p(\boldsymbol{x}|y)\mathrm{d}\boldsymbol{x} = C(s)\boldsymbol{1}\{y\in s\}\cdot\int_{{\mathcal{X}}}p(\boldsymbol{x}|y)\mathrm{d}\boldsymbol{x}
     \\
     &=p(s|\boldsymbol{x}, y) \int_{{\mathcal{X}}}p(\boldsymbol{x}|y)\mathrm{d}\boldsymbol{x}.
 \end{align*}
By noticing that $\int_{{\mathcal{X}}}p(\boldsymbol{x}|y)\mathrm{d}\boldsymbol{x} = 1$, we have
\begin{equation*}
    p(s|y) = p(s|\boldsymbol{x}, y) \int_{{\mathcal{X}}}p(\boldsymbol{x}|y)\mathrm{d}\boldsymbol{x} = p(s|\boldsymbol{x}, y).
\end{equation*}
Hence,~\eqref{cmm1} holds and we have
\begin{equation*}
    p(s|y)=p(s|\boldsymbol{x}, y)=C(s)\boldsymbol{1}\{y\in s\},
\end{equation*}
which concludes the proof.
\subsection{Proof of Proposition~\ref{proppcplstronger}}
First, CL is a special case of MCL and the CL assumption is different from the PCPL assumption when $K\geq 3$. Hence, we know the opposite is not true. Next, it suffices to prove that 
 \begin{equation*}
     Q_{k} = \frac{\binom{K-1}{k-1}}{2^{K-1} - 1}.
 \end{equation*}
 By definition, we have
 \begin{align*}
     Q_k &= P(|s| = k) = \sum_{|s| = k} p(s)=\sum_{|s|=k}\int_{{\mathcal{X}}}\sum_{i=1}^K p(\boldsymbol{x}, y=i, s)\mathrm{d}\boldsymbol{x}\\
     &=\sum_{|s|=k}\sum_{i=1}^K\int_{{\mathcal{X}}}p(s|\boldsymbol{x}, y=i)p(\boldsymbol{x}, y=i) \mathrm{d}\boldsymbol{x}\\
     &=\sum_{|s|=k}\sum_{i=1}^K\int_{{\mathcal{X}}}\frac{1}{2^{K-1}-1}\boldsymbol{1}\{i\in s\}p(\boldsymbol{x}, y=i) \mathrm{d}\boldsymbol{x}
     \\
     &=\frac{1}{2^{K-1}-1}\sum_{|s|=k}\sum_{i=1}^K\boldsymbol{1}\{i\in s\}p(y=i)\\
     &=\frac{1}{2^{K-1}-1}\sum_{i=1}^K\sum_{|s|=k}\boldsymbol{1}\{i\in s\}p(y=i).
 \end{align*}
 When $i$ is fixed in $s$, there are $\binom{K-1}{k-1}$ different choices of $s$ with size $k$. By noticing that $\sum_{i=1}^Kp(y=i) = 1$, we have
 \begin{align*}
     Q_k &= \frac{1}{2^{K-1}-1}\sum_{i=1}^K\binom{K-1}{k-1}p(y=i)=\frac{\binom{K-1}{k-1}}{2^{K-1} - 1},
 \end{align*}
 which completes the proof.

\subsection{Proof of Theorem~\ref{theorem:pplequivalence}}
     If~\eqref{propercond} holds, i.e., $p(s|\boldsymbol{x}, y) = C(\boldsymbol{x}, s)\boldsymbol{1}\{y\in s\}$, then we have
     \begin{align*}
         r_y(\boldsymbol{x}, s) &= \frac{p(y, s|\boldsymbol{x})}{p(s|\boldsymbol{x})} = \frac{p(y, s|\boldsymbol{x})}{\sum_{k\in s}p(y = k, s|\boldsymbol{x})}\\
         &= \frac{p(s|\boldsymbol{x}, y)p(y|\boldsymbol{x})}{\sum_{k\in s}p(s|\boldsymbol{x}, y = k)p(y=k|\boldsymbol{x})}\\
         &= \frac{C(\boldsymbol{x}, s)\boldsymbol{1}\{y \in s\}p(y|\boldsymbol{x})}{\sum_{k \in s}C(\boldsymbol{x}, s)\boldsymbol{1}\{k \in s\}p(y=k|\boldsymbol{x})}\\
         &= \frac{p(y|\boldsymbol{x})}{\sum_{k\in s}p(y=k|\boldsymbol{x})}\boldsymbol{1}\{y\in s\}.
     \end{align*}
     Conversely, if~\eqref{pplConfidence} holds, we have
     \begin{align*}
         p(s|\boldsymbol{x}, y) = \frac{p(y,s|\boldsymbol{x})}{p(y|\boldsymbol{x})} = \frac{p(y|\boldsymbol{x}, s)p(s|\boldsymbol{x})}{p(y|\boldsymbol{x})} = \frac{p(s|\boldsymbol{x})}{\sum_{k \in s}p(y=k|\boldsymbol{x})}\boldsymbol{1}\{y \in s\}.
     \end{align*}
     Hence, there exists a function \begin{equation*}
         C(\boldsymbol{x}, s) = \frac{p(s|\boldsymbol{x})}{\sum_{k \in s}p(y=k|\boldsymbol{x})}
     \end{equation*} 
     such that
     \begin{equation*}
         p(s|\boldsymbol{x}, y) = C(\boldsymbol{x}, s)\boldsymbol{1}\{y\in s\}.
     \end{equation*}

\subsection{Proof of Theorem~\ref{theorem:clc}}
By definition, we have
    \begin{align*}
        R(f;{\mathcal{L}}) &= \int_{{\mathcal{X}}}\sum_{j=1}^K{\mathcal{L}}(f(\boldsymbol{x}), j)p(\boldsymbol{x}, y=j)\mathrm{d}\boldsymbol{x}= \int_{{\mathcal{X}}}\sum_{j=1}^K\sum_{s\in{\mathcal{S}}}{\mathcal{L}}(f(\boldsymbol{x}), j)p(\boldsymbol{x}, y=j, s)\mathrm{d}\boldsymbol{x}\\
        &= \int_{{\mathcal{X}}}\sum_{j=1}^K\sum_{s\in{\mathcal{S}}}{\mathcal{L}}(f(\boldsymbol{x}), j)p(y=j|\boldsymbol{x}, s)p(\boldsymbol{x}, s)\mathrm{d}\boldsymbol{x}\\
        &= \int_{{\mathcal{X}}}\sum_{s\in{\mathcal{S}}}\sum_{j=1}^Kr_j(\boldsymbol{x}, s){\mathcal{L}}(f(\boldsymbol{x}), j)p(\boldsymbol{x}, s)\mathrm{d}\boldsymbol{x}\\
        &= \mathbb{E}_{p(\boldsymbol{x}, s)}\left[\sum_{j\in s}r_j(\boldsymbol{x}, s){\mathcal{L}}(f(\boldsymbol{x}), j)\right].
    \end{align*}

\subsection{Proof of Theorem~\ref{theorem:ppl}}
By Theorem~\ref{theorem:pplequivalence},~\eqref{pplConfidence} holds. By Theorem~\ref{theorem:clc}, we have
    \begin{align*}
        R(f;{\mathcal{L}}) &= \mathbb{E}_{p(\boldsymbol{x}, s)}\left[\sum_{j\in s}r_j(\boldsymbol{x}, s){\mathcal{L}}(f(\boldsymbol{x}), j)\right]\\& =\mathbb{E}_{p(\boldsymbol{x}, s)}\left[\sum_{j\in s}\frac{p(y=j|\boldsymbol{x})}{\sum_{k\in s}p(y=k|\boldsymbol{x})}{\mathcal{L}}(f(\boldsymbol{x}), j)\right], 
    \end{align*}
    which completes the proof.
    
\subsection{Proof of Theorem~\ref{theorem:bound}}
We first introduce the following lemma:
    \begin{lemma}
    \label{lemma1}
    The following inequality holds:
    \begin{equation*}
        0\leq R({\hat{f}_\mathrm{PPL}}) - R(f^*) \leq 2\sup_{f\in{\mathcal{F}}}\left\lvert R(f)-{\hat{R}_\mathrm{PPL}}(f)\right\rvert.
    \end{equation*}
    \end{lemma}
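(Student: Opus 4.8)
The plan is to prove the two inequalities separately, treating the lower bound as immediate and reducing the upper bound to the standard three-term excess-risk decomposition of empirical risk minimization.

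For the lower bound $0 \le R(\hat{f}_{\mathrm{PPL}}) - R(f^*)$, I would simply invoke the definition $f^* = \arg\min_{f\in\mathcal{F}} R(f)$. Since $\hat{f}_{\mathrm{PPL}}\in\mathcal{F}$, it is a feasible point in the minimization defining $f^*$, so $R(f^*)\le R(\hat{f}_{\mathrm{PPL}})$, which rearranges to the claim.

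For the upper bound, I would insert and cancel the empirical risk evaluated at the two relevant classifiers, writing
\[
R(\hat{f}_{\mathrm{PPL}}) - R(f^*) = \bigl[R(\hat{f}_{\mathrm{PPL}}) - \hat{R}_{\mathrm{PPL}}(\hat{f}_{\mathrm{PPL}})\bigr] + \bigl[\hat{R}_{\mathrm{PPL}}(\hat{f}_{\mathrm{PPL}}) - \hat{R}_{\mathrm{PPL}}(f^*)\bigr] + \bigl[\hat{R}_{\mathrm{PPL}}(f^*) - R(f^*)\bigr].
\]
The middle bracket is nonpositive because $\hat{f}_{\mathrm{PPL}}$ minimizes $\hat{R}_{\mathrm{PPL}}$ over $\mathcal{F}$ while $f^*\in\mathcal{F}$, hence $\hat{R}_{\mathrm{PPL}}(\hat{f}_{\mathrm{PPL}})\le \hat{R}_{\mathrm{PPL}}(f^*)$. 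Each of the two outer brackets is bounded in absolute value by $\sup_{f\in\mathcal{F}}\lvert R(f) - \hat{R}_{\mathrm{PPL}}(f)\rvert$, since both $\hat{f}_{\mathrm{PPL}}$ and $f^*$ lie in $\mathcal{F}$. Adding the two outer contributions produces the factor of $2$ and yields the stated bound.

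This argument is essentially mechanical and presents no genuine technical obstacle; the only point requiring care is the interplay of the two optimality properties. The decomposition succeeds precisely because each classifier is compared against the objective it does \emph{not} minimize: $\hat{f}_{\mathrm{PPL}}$ appears under $R$ but its optimality is exploited under $\hat{R}_{\mathrm{PPL}}$, and symmetrically for $f^*$. I would also emphasize that the derivation uses nothing about the specific form of $\hat{R}_{\mathrm{PPL}}$ beyond membership $\hat{f}_{\mathrm{PPL}}, f^*\in\mathcal{F}$ and the empirical-minimizer property, so all the remaining work in Theorem~\ref{theorem:bound} is deferred to controlling the uniform deviation $\sup_{f\in\mathcal{F}}\lvert R(f) - \hat{R}_{\mathrm{PPL}}(f)\rvert$ on the right-hand side.
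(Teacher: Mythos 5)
Your proposal is correct and matches the paper's proof of Lemma~\ref{lemma1} essentially verbatim: the lower bound from the optimality of $f^*$, and the same three-term decomposition in which the middle term $\hat{R}_\mathrm{PPL}(\hat{f}_\mathrm{PPL}) - \hat{R}_\mathrm{PPL}(f^*)$ is dropped as nonpositive and each outer term is bounded by the uniform deviation. Nothing is missing.
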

    \begin{proof} By definition, $R({\hat{f}_\mathrm{PPL}})-R(f^*)\geq 0$. Also, we have
    \begin{align*}
        R({\hat{f}_\mathrm{PPL}}) - R(f^*) &= \left(R({\hat{f}_\mathrm{PPL}}) - {\hat{R}_\mathrm{PPL}}({\hat{f}_\mathrm{PPL}})\right)\\ &+ \left({\hat{R}_\mathrm{PPL}}({\hat{f}_\mathrm{PPL}}) - {\hat{R}_\mathrm{PPL}}(f^*)\right) \\&+ \left({\hat{R}_\mathrm{PPL}}(f^*) - R(f^*)\right) \\
        &\leq \left(R({\hat{f}_\mathrm{PPL}}) - {\hat{R}_\mathrm{PPL}}({\hat{f}_\mathrm{PPL}})\right) + \left({\hat{R}_\mathrm{PPL}}(f^*) - R(f^*)\right) \\
        &\leq 2\sup_{f\in{\mathcal{F}}}\left\lvert R(f)-{\hat{R}_\mathrm{PPL}}(f)\right\rvert.
    \end{align*}
    \end{proof}
    The Rademacher complexity~\cite{mohri2018foundations} is defined as follows:
    \begin{definition}[Empirical Rademacher Complexity]
    Let ${\mathcal{G}}$ be a class of functions mapping ${\mathcal{Z}}$ to $\mathbb{R}$ and $S=(z_1, \dots, z_n) \in {\mathcal{Z}}^n$ a fixed sample of size $n$. Then, the empirical Rademacher complexity of ${\mathcal{G}}$ with respect to the sample S is defined as
    \begin{equation*}
        \hat{{\mathcal{R}}}_S({\mathcal{G}}) = \mathbb{E}_{\boldsymbol{\sigma}}\left[\sup_{g\in {\mathcal{G}}}\frac{1}{n}\sum_{i=1}^n\sigma_ig(z_i)\right],
    \end{equation*}
    where $\boldsymbol{\sigma}=(\sigma_1, \ldots, \sigma_n)$, with $\sigma_i$s independent uniform random variables taking values in $\{-1,+1\}$.

    \end{definition}
    \begin{definition}[Rademacher Complexity]
    Suppose the sample $S$ of size $n$ is drawn independently from the distribution denoted by a probability density function $p$. The Rademacher complexity of ${\mathcal{G}}$ with respect to $p$ is defined as
    \begin{equation*}
        {\mathcal{R}}_n({\mathcal{G}})=\mathbb{E}_{z_i\sim p}\left[\hat{{\mathcal{R}}}_S({\mathcal{G}})\right].
    \end{equation*}
    \end{definition}
    We introduce a class of functions defined on ${\mathcal{X}}\times{\mathcal{S}}$ according to Eq.~\eqref{eq:pplrisk}:
    \begin{equation*}
        {\mathcal{G}}=\{(\boldsymbol{x}, s)\rightarrow \sum_{j\in s}\frac{p(y=j|\boldsymbol{x})}{\sum_{k\in s}p(y=k|\boldsymbol{x})}{\mathcal{L}}(f(\boldsymbol{x}), j):f\in{\mathcal{F}}\}.
    \end{equation*}
    Then, the Rademacher complexity of ${\mathcal{G}}$ with respect to $p(\boldsymbol{x}, s)$ is given as:
    \begin{equation*}
        {\mathcal{R}}_n({\mathcal{G}})=\mathbb{E}_{(\boldsymbol{x}_i, s_i)\sim p}\left[\mathbb{E}_{\boldsymbol{\sigma}}\left[\sup_{g\in{\mathcal{G}}}\frac{1}{n}\sum_{i=1}^n\sigma_ig(\boldsymbol{x}_i, s_i)\right]\right].
    \end{equation*}
    We have the following lemma:
    \begin{lemma}\label{lemma2}
    Suppose $M:=\sup_{\boldsymbol{x}\in{\mathcal{X}},y\in{\mathcal{Y}},f\in{\mathcal{F}}}{\mathcal{L}}(f(\boldsymbol{x}),y)<\infty$, then, for any $\delta>0$, the following holds with probability at least $1-\delta$:
    \begin{equation*}
        \sup_{f\in{\mathcal{F}}}\left\lvert R(f)-{\hat{R}_\mathrm{PPL}}(f)\right\rvert \leq 2{\mathcal{R}}_n({\mathcal{G}})+M\sqrt{\frac{\log\frac{2}{\delta}}{2n}}.
    \end{equation*}
    \end{lemma}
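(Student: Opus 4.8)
The plan is to follow the standard uniform-convergence route built on McDiarmid's bounded-difference inequality together with symmetrization, so that the only PPL-specific work is verifying that the summand appearing in $\hat{R}_\mathrm{PPL}$ is bounded. For brevity write $g_f(\boldsymbol{x}, s) = \sum_{j\in s} r_j(\boldsymbol{x}, s)\,{\mathcal{L}}(f(\boldsymbol{x}), j)$ for the member of ${\mathcal{G}}$ indexed by $f$, so that $\hat{R}_\mathrm{PPL}(f) = \frac{1}{n}\sum_{i=1}^n g_f(\boldsymbol{x}_i, s_i)$ and, by Theorem~\ref{theorem:ppl}, $R(f) = \mathbb{E}_{p(\boldsymbol{x}, s)}[g_f(\boldsymbol{x}, s)]$. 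The whole statement is then an instance of the textbook deviation bound for the function class ${\mathcal{G}}$.

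First I would record the key boundedness fact. Since the weights $r_j(\boldsymbol{x}, s) = p(y=j|\boldsymbol{x})/\sum_{k\in s}p(y=k|\boldsymbol{x})$ are nonnegative and sum to one over $j\in s$, the value $g_f(\boldsymbol{x}, s)$ is a convex combination of the losses ${\mathcal{L}}(f(\boldsymbol{x}), j)$, each lying in $[0, M]$; hence $0\leq g_f(\boldsymbol{x}, s)\leq M$ for every $f\in{\mathcal{F}}$ and every $(\boldsymbol{x}, s)$. This is the one place where the probabilistic structure of the confidences $r_j$ is essential.

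Next I would define the one-sided deviation $\Phi(S) = \sup_{f\in{\mathcal{F}}}\bigl(R(f) - \hat{R}_\mathrm{PPL}(f)\bigr)$ as a function of the sample $S = \{(\boldsymbol{x}_i, s_i)\}_{i=1}^n$. Replacing a single example alters $\hat{R}_\mathrm{PPL}(f)$ by at most $M/n$ thanks to the boundedness above, and therefore changes $\Phi$ by at most $M/n$ uniformly in $f$. McDiarmid's inequality then gives, with probability at least $1 - \delta/2$, the bound $\Phi(S)\leq \mathbb{E}_S[\Phi(S)] + M\sqrt{\log(2/\delta)/(2n)}$. A standard symmetrization step—introducing a ghost sample $S'$, passing to $\mathbb{E}_{S,S'}\sup_f(\hat{R}'(f) - \hat{R}(f))$, and inserting Rademacher variables—controls the expectation by $\mathbb{E}_S[\Phi(S)]\leq 2{\mathcal{R}}_n({\mathcal{G}})$.

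Finally I would repeat the identical two steps for the reverse deviation $\sup_{f\in{\mathcal{F}}}\bigl(\hat{R}_\mathrm{PPL}(f) - R(f)\bigr)$, again spending $\delta/2$ of the failure probability, and combine the two events by a union bound to control the absolute value $\sup_{f\in{\mathcal{F}}}|R(f) - \hat{R}_\mathrm{PPL}(f)|$; this is precisely where the $\log(2/\delta)$ rather than $\log(1/\delta)$ arises. The main obstacle is not any individual inequality but pinning down the bounded-difference constant as exactly $M/n$, which in turn rests on the observation that the $r_j$ form a probability vector over $s$ and thus render $g_f$ a convex combination confined to $[0, M]$. Granted that, both McDiarmid's inequality and the symmetrization argument are routine.
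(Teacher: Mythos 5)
Your proposal is correct and follows essentially the same route as the paper's own proof: a one-sided supremum $\phi(S)$, the bounded-difference constant $M/n$, McDiarmid's inequality at level $\delta/2$ per side, symmetrization to bound the expectation by $2{\mathcal{R}}_n({\mathcal{G}})$, and a union bound over the two deviations. The only difference is cosmetic---you spell out explicitly that the confidences $r_j(\boldsymbol{x}, s)$ form a probability vector over $s$, so that each $g_f$ is a convex combination of losses and hence lies in $[0, M]$, a fact the paper states only later in its proof of Lemma~\ref{lemma3} but uses implicitly here.
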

    \begin{proof}
     For a sample $S$, we define $\phi(S) = \sup_{f\in{\mathcal{F}}}\left\{R(f)-{\hat{R}_\mathrm{PPL}}(f)\right\}$. Suppose we replace an example $(\boldsymbol{x}_i, s_i)$ in the sample $S$ with another example $(\boldsymbol{x}'_i, s'_i)$, the change of $\phi(S)$ is no greater than $$\sup_{g\in{\mathcal{G}}}\frac{g(\boldsymbol{x}_i, s_i) - g(\boldsymbol{x}'_i, s'_i)}{n}\leq \frac{M}{n},$$ since ${\mathcal{L}}$ is bounded by $M$. Then, by \emph{McDiarmid's inequality}~\cite{mcdiarmid1989method}, for any $\delta > 0$, with probability at least $1 - \delta/2$, the
following holds:
\begin{equation*}
    \phi(S)\leq\mathbb{E}_{(\boldsymbol{x}_i, s_i)\sim p}\left[\phi(S)\right] + M\sqrt{\frac{\log\frac{2}{\delta}}{2n}}.
\end{equation*}
It is a routine work~\cite{mohri2018foundations} to show
$\mathbb{E}_{(\boldsymbol{x}_i, s_i)\sim p}\left[\phi(S)\right]\leq 2{\mathcal{R}}_n({\mathcal{G}})$. Hence, the following holds with probability at least $1 - \delta/2$:
\begin{equation}\label{lemma2eq1}
    \sup_{f\in{\mathcal{F}}}\left\{R(f)-{\hat{R}_\mathrm{PPL}}(f)\right\}\leq 2{\mathcal{R}}_n({\mathcal{G}}) + M\sqrt{\frac{\log\frac{2}{\delta}}{2n}}.
\end{equation}
Similarly, we can prove that the following holds with probability at least $1 - \delta/2$: 
\begin{equation}\label{lemma2eq2}
    \sup_{f\in{\mathcal{F}}}\left\{{\hat{R}_\mathrm{PPL}}(f)-R(f)\right\}\leq 2{\mathcal{R}}_n({\mathcal{G}}) + M\sqrt{\frac{\log\frac{2}{\delta}}{2n}}.
\end{equation}
We complete the proof by combining~\eqref{lemma2eq1} and~\eqref{lemma2eq2}.
    \end{proof}
    Next, we bound the Rademacher complexity ${\mathcal{R}}_n({\mathcal{G}})$ by the following lemma~\cite{feng2020provably}:
    \begin{lemma}\label{lemma3}
    Suppose that the loss ${\mathcal{L}}(f(\boldsymbol{x}), s)$ is $\rho$-Lipschitz with respect to $f(\boldsymbol{x})$ for all $y\in{\mathcal{Y}}$. Then, the following inequality holds:
    \begin{equation*}
        {\mathcal{R}}_n({\mathcal{G}})\leq\sqrt{2}\rho\sum_{i=1}^K{\mathcal{R}}_n({\mathcal{F}}_i).
    \end{equation*}
    \end{lemma}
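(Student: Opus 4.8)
The plan is to bound $\mathcal{R}_n(\mathcal{G})$ by recognizing that each element of $\mathcal{G}$ is the composition of a fixed, $f$-independent contraction with the vector-valued map $\boldsymbol{x}\mapsto(f_1(\boldsymbol{x}),\ldots,f_K(\boldsymbol{x}))$, and then invoking the vector-contraction inequality for Rademacher complexities of \citet{maurer2016vector}. The crucial observation is that, in the definition of $\mathcal{G}$, the weights $w_j(\boldsymbol{x},s):=p(y=j|\boldsymbol{x})/\sum_{k\in s}p(y=k|\boldsymbol{x})$ depend on $(\boldsymbol{x},s)$ only through the \emph{true} posterior and not on the classifier $f$. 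Hence, for each fixed $(\boldsymbol{x},s)$, the scalar map $\psi_{\boldsymbol{x},s}(\boldsymbol{u}):=\sum_{j\in s}w_j(\boldsymbol{x},s)\mathcal{L}(\boldsymbol{u},j)$ is a fixed function of $\boldsymbol{u}\in\mathbb{R}^K$, and each $g\in\mathcal{G}$ can be written as $g(\boldsymbol{x},s)=\psi_{\boldsymbol{x},s}(f(\boldsymbol{x}))$.

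First I would verify that $\psi_{\boldsymbol{x},s}$ is $\rho$-Lipschitz with respect to the Euclidean norm on $\mathbb{R}^K$. Since each $\mathcal{L}(\cdot,j)$ is $\rho$-Lipschitz by assumption, for any $\boldsymbol{u},\boldsymbol{u}'\in\mathbb{R}^K$ we have $|\psi_{\boldsymbol{x},s}(\boldsymbol{u})-\psi_{\boldsymbol{x},s}(\boldsymbol{u}')|\le\sum_{j\in s}w_j(\boldsymbol{x},s)\,|\mathcal{L}(\boldsymbol{u},j)-\mathcal{L}(\boldsymbol{u}',j)|\le\rho\|\boldsymbol{u}-\boldsymbol{u}'\|$, where the last step uses that the weights are nonnegative and satisfy $\sum_{j\in s}w_j(\boldsymbol{x},s)=1$, so they form a convex combination and do not inflate the Lipschitz constant.

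Next I would apply the vector-contraction inequality to the $n$ fixed $\rho$-Lipschitz maps $\psi_{\boldsymbol{x}_i,s_i}$ (for each fixed sample, then taking the outer expectation). This replaces the scalar Rademacher variables $\sigma_i$ acting on $\psi_{\boldsymbol{x}_i,s_i}(f(\boldsymbol{x}_i))$ by a doubly-indexed family $\sigma_{ik}$ acting coordinatewise on $f_k(\boldsymbol{x}_i)$, at the cost of a factor $\sqrt{2}\rho$:
\begin{equation*}
\mathcal{R}_n(\mathcal{G})\le\frac{\sqrt{2}\rho}{n}\,\mathbb{E}_{(\boldsymbol{x}_i,s_i)\sim p}\mathbb{E}_{\boldsymbol{\sigma}}\left[\sup_{f\in\mathcal{F}}\sum_{i=1}^n\sum_{k=1}^K\sigma_{ik}f_k(\boldsymbol{x}_i)\right].
\end{equation*}
Because $\mathcal{F}=\oplus_{i\in[K]}\mathcal{F}_i$, the supremum factorizes across coordinates, $\sup_{f\in\mathcal{F}}\sum_{i,k}\sigma_{ik}f_k(\boldsymbol{x}_i)=\sum_{k=1}^K\sup_{f_k\in\mathcal{F}_k}\sum_{i}\sigma_{ik}f_k(\boldsymbol{x}_i)$, and taking expectations term by term yields exactly $\sqrt{2}\rho\sum_{k=1}^K\mathcal{R}_n(\mathcal{F}_k)$, which is the claimed bound.

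I expect the main obstacle to be the careful bookkeeping in the contraction step rather than any deep difficulty: one must confirm that the loss is Lipschitz in the same (Euclidean) norm for which the vector-contraction inequality is stated, and that the weights $w_j$ can legitimately be absorbed into a single $\rho$-Lipschitz scalar function, so that the per-sample maps $\psi_{\boldsymbol{x}_i,s_i}$ are precisely the contractions to which the inequality applies. Once these points are settled, the decomposition over the direct-sum structure of $\mathcal{F}$ is routine.
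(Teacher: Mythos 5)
Your proof is correct and follows essentially the same route as the paper: both exploit that the confidences are nonnegative and sum to one over the candidate set (so they form an $f$-independent convex combination that preserves the $\rho$-Lipschitz property), then invoke Maurer's vector-contraction inequality and the direct-sum structure of $\mathcal{F}$. The only difference is organizational: the paper inserts an intermediate class $\Pi=\{(\boldsymbol{x},y)\mapsto\mathcal{L}(f(\boldsymbol{x}),y):f\in\mathcal{F}\}$ and asserts $\mathcal{R}_n(\mathcal{G})\le\mathcal{R}_n(\Pi)$ before contracting, whereas you apply the contraction directly to $\mathcal{G}$ via the fixed per-sample maps $\psi_{\boldsymbol{x}_i,s_i}$ --- which is, if anything, the more careful way to make the same step precise.
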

    \begin{proof}
     Let $\Pi=\{(\boldsymbol{x}, y)\rightarrow{\mathcal{L}}(f(\boldsymbol{x}), y): f\in {\mathcal{F}}\}$. Notice that the candidate label confidence $r_y(\boldsymbol{x}, s)$ satisfies that $0\leq r_y(\boldsymbol{x}, s)\leq 1$ and that $\sum_{k\in s}r_k(\boldsymbol{x}, s) = 1$. In this way, we can obtain ${\mathcal{R}}_n({\mathcal{G}})\leq {\mathcal{R}}_n(\Pi)$. Since ${\mathcal{L}}$ is $\rho$-Lipschitz with respect to $f(\boldsymbol{x})$, following the Rademacher vector contraction inequality~\cite{maurer2016vector}, we have ${\mathcal{R}}_n(\Pi)\leq\sqrt{2}\rho\sum_{i=1}^K{\mathcal{R}}_n({\mathcal{F}}_i)$, which concludes the proof.
    \end{proof}
    Finally, the proof of Theorem~\ref{theorem:ppl} is completed by combining Lemma~\ref{lemma1}, Lemma~\ref{lemma2} and Lemma~\ref{lemma3}.
    
\section{Experiment Details}\label{Appendix_Expdetails}

    We list here the details of the four benchmark datasets used in our experiments:
    \paragraph{MNIST:} 
    A 10-class dataset containing handwritten digits from 0 to 9. MNIST has in total 60,000 training images and 10,000 test images. Each instance is a 28 $\times$ 28 grayscale image.
    \paragraph{Fashion-MNIST:}
    A 10-class dataset of fashion items. Fashion-MNIST has in total 60,000 training images and 10,000 test images. Each instance is a 28 $\times$ 28 grayscale image.
    \paragraph{Kuzushiji-MNIST:}
    A 10-class dataset of cursive Japanese (Kuzushiji) characters. Kuzushiji-MNIST has in total 60,000 training images and 10,000 test images. Each instance is a 28 $\times$ 28 grayscale image.
    \paragraph{CIFAR-10:}
    A 10-class dataset of colored images (airplane, automobile, bird,
cat, deer, dog, frog, horse, ship, and truck). CIFAR-10 has in total 50,000 training images and 10,000 test images. Each instance is a 32 $\times$ 32 $\times$ 3 colored image.

    The information of each dataset and its corresponding models is summarized in Table \ref{dataset}.

    \begin{table}[t]\centering
    \caption{Information of datasets and corresponding models.}
    \vspace{0.2cm}
        \newcommand{\tabincell}[2]{\begin{tabular}{@{}#1@{}}#2\end{tabular}}
        \begin{tabular}{ c|c c c|c } 
        \hline
        Dataset & \tabincell{c}{\#Train} & \tabincell{c}{\#Test} & \tabincell{c}{Dimension} & 
        \tabincell{c}{Model} \\ 
        \hline
        MNIST & 60,000 & 10,000 & 784  & Linear, MLP \\ 

        Fashion-MNIST & 60,000 & 10,000 & 784  & Linear, MLP \\ 
        Kuzushiji-MNIST & 60,000 & 10,000 & 784 &Linear, MLP \\ 
        CIFAR-10 & 50,000 & 10,000 & 3,072 &  ResNet, DenseNet \\ 
        \hline
        \end{tabular}
        
        \label{dataset}
    \end{table}

\end{document}